%% file: iclr2027_conference.tex
\documentclass{article} 
\usepackage{iclr2027_conference,times}
\usepackage{graphicx} 
\usepackage{booktabs}  
\usepackage{titletoc}
\usepackage{makecell}  
\usepackage{tabularx} 
\usepackage{amsthm} 
\usepackage{xcolor,colortbl}
\definecolor{calibrationblue}{HTML}{EAF3FB}
\usepackage{wrapfig}
\newcommand{\meanstd}[2]{%
    $#1$\,{\scriptsize $\pm #2$}}
\newcommand{\bestmeanstd}[2]{%
    $\mathbf{#1}$\,{\scriptsize $\boldmath\mathbf{\pm #2}$}} 

\input{math_commands.tex}

\newtheorem*{cateadjustmentrestated}
{Proposition~\ref{prop:exact_cate_adjustment} (restated)}
\newtheorem{assumption}{Assumption} 
\newtheorem{proposition}{Proposition}
\usepackage{hyperref}
\usepackage{url}
\usepackage{xcolor}

\title{CAFE: Counterfactual Prediction via Fast Posterior Estimation}

\author{
\textbf{Xinyan Han$^{1,*}$ \quad
Xiaoyu Lin$^{1,*}$ \quad
Hao Zou$^{2}$} \quad
\textbf{Xingxuan Zhang$^{1,2}$ \quad
Bo Li$^{1}$ \quad
Peng Cui$^{1,\dagger}$}\\
$^{1}$Tsinghua University \quad
$^{2}$Stable AI \\
\texttt{han-xy25@mails.tsinghua.edu.cn} \quad
\texttt{clementinexiaoyu@gmail.com} \\
\texttt{ahio@163.com} \quad
\texttt{xingxuanzhang@hotmail.com} \\
\texttt{libo@sem.tsinghua.edu.cn} \quad \texttt{cuip@tsinghua.edu.cn}
}

\iclrfinalcopy 
\begin{document}

\maketitle
\fancyhead{}
\lhead{Preprint}
\renewcommand{\headrulewidth}{0pt}
\begin{abstract} 
Counterfactual prediction estimates an individual's outcome under an alternative intervention given their factual observations. Such outcomes are generally not identifiable from observational data without additional assumptions. Even within the class of fully observed additive noise models (ANMs), different causal graphs can generate the same observational distribution yet imply different individual counterfactual outcomes. Predictions based on a single estimated graph ignore this structural uncertainty. We therefore target a Bayesian counterfactual posterior predictive distribution that combines predictions from plausible SCMs. 
 We introduce CAFE (\textbf{C}ounterf\textbf{A}ctual Prediction via \textbf{F}ast Posterior \textbf{E}stimation), an amortized inference framework that directly approximates the Bayesian counterfactual posterior predictive distribution. We pretrain a transformer-based model on synthetic counterfactual tasks generated from a diverse prior over ANMs. Given an observational dataset, an individual's factual observations, and an intervention, CAFE approximates the corresponding posterior predictive distribution in a single forward pass. Experiments show that CAFE accurately predicts individual counterfactual outcomes in identifiable settings and approximates the posterior predictive distribution when structural uncertainty induced by observationally indistinguishable causal graphs exists. Strong performance in realistic manufacturing and viticulture settings further demonstrates its empirical robustness beyond the assumptions of the training prior.
\end{abstract}

\input{data/1intro}

\input{data/2problem}
\input{data/3method}
\input{data/4experiments} 
\vspace{-2pt}
 \section{Conclusion} 
 \vspace{-2pt}
We developed CAFE for individual counterfactual prediction when the causal graph is unknown and observational data may support conflicting counterfactual outcomes. Our analysis characterizes counterfactual identifiability and structural uncertainty under fully observed ANMs. Building on this foundation, CAFE amortizes Bayesian counterfactual inference through in-context learning on synthetic tasks drawn from a diverse SCM prior. Experiments demonstrate accurate predictions in identifiable settings, preservation of uncertainty from observationally equivalent models, and empirical robustness beyond the training assumptions. Our theoretical analysis is based on fully observed ANMs with independent exogenous noises. Although Section 4.3 provides empirical evidence of robustness to violations of these assumptions, extending the theoretical characterization of counterfactual identifiability and posterior uncertainty to more general classes of SCMs remains an important direction for future work.

\subsection*{AI use statement}
We used generative AI tools to assist in the writing of proofs, assist with translation. We have not used generative AI tools to generate synthetic data sets, help develop theoretical models or conceptual frameworks, formulate mathematical claims, provide critical ingredients for proving mathematical claims, propose or refine hypotheses, design or provide feedback on research methodology or experiments, implement methods, clean and reformat dataset, support qualitative and thematic data analysis, interpret results.  Additionally, we used generative AI tools for summarizing or analyse existing literature, sourcing/searching for information and identify relevant literature.
We take responsibility for the final content of this work, including text, claims or artifacts produced with the aid of generative AI.

\bibliography{iclr2027_conference}
\bibliographystyle{iclr2027_conference}

\clearpage
\appendix

\startcontents[appendices]
\section*{Appendix Contents}
\printcontents[appendices]{}{1}{\setcounter{tocdepth}{2}}

\clearpage
\input{data/appendix}

\end{document}

%% file: math_commands.tex
\usepackage{amsmath,amsfonts,bm}

\def\eqref#1{equation~\ref{#1}}

\def\1{\bm{1}}

\DeclareMathAlphabet{\mathsfit}{\encodingdefault}{\sfdefault}{m}{sl}
\SetMathAlphabet{\mathsfit}{bold}{\encodingdefault}{\sfdefault}{bx}{n}



%% file: data/1intro.tex
\section{Introduction} 
Counterfactual prediction asks what an individual's outcome would have been under an alternative intervention, given their factual observations~\citep{pearl2013structural}. The ability to reason about such alternative outcomes supports causal explanation~\citep{halpern2005causes}, the evaluation of past actions~\citep{oberst2019counterfactual}, and individualized decision-making~\citep{karimi2020algorithmic}. A fundamental challenge is that counterfactual outcomes are generally not identifiable from observational data without additional assumptions on the data-generating process~\citep{zhang2015estimation,spirtes2016causal,bodik2026retrospective}. Even when the observational distribution is known exactly, individual counterfactual non-identifiability can arise from uncertainty about the underlying causal graph, the structural equations, or the individual's exogenous noise values.

A common approach to make individual counterfactual outcomes identifiable is to assume that the correct causal graph or ordering is known and impose restrictions on the structural equations. Many SCM-based methods learn causal mechanisms conditional on a supplied or estimated graph or ordering~\citep{pawlowski2020dscm,sanchez2022vaca,khemakhem2021caurefl,javaloy2023causalNF,chao2023modeling,scetbon2024fixed,mahajan2024amortized}. In practice, the true structure is often unavailable~\citep{Maathuis_2009}, and causal discovery may not identify it uniquely~\citep{NEURIPS2024_f03fc354}. SCMs with different graphs can generate the same observational distribution yet imply different counterfactual outcomes.  
Predictions based on a single estimated graph ignore this structural uncertainty and can lead to poor predictions.  

To account for this structural uncertainty, we target a Bayesian counterfactual posterior predictive distribution. We remove the requirement of explicitly estimating causal graph and retain restrictions
on the structural equations by considering fully observed, acyclic additive noise models (ANMs) with independent exogenous noises. Given the exact observational distribution and an individual's complete factual observations, each compatible graph determines a unique counterfactual outcome under the conditions of Proposition~\ref{prop:counterfactual_identifiability}. Under these conditions, individual counterfactual non-identifiability arises only when compatible graphs imply different outcomes. The posterior predictive distribution combines these outcomes according to the graphs' posterior probabilities under a prior over ANMs. It is a point mass when the outcome is identifiable and remains non-degenerate when graphs with positive posterior probability imply different outcomes. 

However, exact computation of this posterior predictive distribution is generally intractable. The number of possible causal graphs grows super-exponentially with the number of variables. Computing their posterior probabilities also requires integrating over the structural mechanisms, and these integrals generally lack closed-form solutions for flexible nonlinear models.  To address these  challenges, we propose CAFE (\textbf{C}ounterf\textbf{A}ctual Prediction via \textbf{F}ast Posterior \textbf{E}stimation), an amortized inference framework that directly approximates the Bayesian counterfactual posterior predictive distribution. Following the prior-data fitted network approach~\citep{hollmann2025accurate,robertson2026pfn,balazadeh2026causalpfn}, we pretrain a transformer on synthetic counterfactual tasks generated from the ANM prior. Each task includes an observational dataset, an individual's factual observations, and a specified intervention as inputs, with the corresponding counterfactual outcome as the prediction target. Training amortizes Bayesian integration over causal structures and mechanisms, enabling CAFE to predict a counterfactual distribution from these inputs in a single forward pass without graph search or parameter updates. Our contributions are threefold: 
\begin{enumerate}
    \item We propose CAFE, an amortized inference framework for Bayesian counterfactual prediction that requires neither a known causal graph nor explicit graph estimation. CAFE is pretrained on synthetic tasks generated from a diverse prior over causal graphs, mechanisms, and treatment--outcome relationships, supporting counterfactual prediction across varied causal structures (Section~\ref{sec:prior_generation}).
    \item Under our modeling assumptions, CAFE accurately predicts individual counterfactual outcomes in identifiable settings (Section~\ref{sec:exp_csuite}) and approximates the posterior predictive distribution when observationally indistinguishable causal structures imply different outcomes, preserving uncertainty that observational data alone cannot resolve  (Section~\ref{sec:exp_uncertainty}).
    
    \item We evaluate CAFE on realistic manufacturing and viticulture settings. Strong predictive performance on these tasks demonstrates empirical robustness beyond the assumptions underlying our training prior (Section~\ref{sec:realistic}).
\end{enumerate}

%% file: data/2problem.tex
\section{Problem Setup} 
\label{sec:problem_setup}
\paragraph{Notations.}
A structural causal model (SCM) $\mathcal M=(G,\theta)$ describes
the data-generating process of endogenous variables
$\mathbf V=(V_1,\ldots,V_d)$ through the structural assignments
\begin{equation}
V_i = F_i\!\left(\mathbf V_{\operatorname{pa}G(i)},U_i;\theta\right),
\qquad i=1,\ldots,d.
\end{equation}
where $\mathbf U=(U_1,\ldots,U_d)$ denotes the exogenous variables
and $\operatorname{pa}_G(i)$ the parent indices of node $i$
in the causal graph $G$, and $\mathbf V_{\operatorname{pa}_G(i)}$
is the corresponding subvector.
The parameters $\theta$ specify both the structural functions
and the joint distribution of the exogenous
variables $\mathbf U=(U_1,\ldots,U_d)$.
Let $P_{\mathbf V}$ denote the induced observational distribution
and $\mathcal D_{\mathrm{obs}}=\{\mathbf v^{(j)}\}_{j=1}^{n}$
a dataset of $n$ i.i.d.\ observations drawn from $P_{\mathbf V}$.

\paragraph{Counterfactual prediction.}
Counterfactual prediction asks what an individual's outcome
would have been under a different treatment. For an individual with factual observations $\mathbf V=\mathbf v$,
let $T=V_j$ be the intervention variable and $Y=V_k$ the outcome,
with $j\neq k$. We denote by $Y^t$ the outcome that this same individual would
have experienced under $\operatorname{do}(T=t)$. The counterfactual outcome $Y^t$ is defined as the value of $Y$ in the modified SCM obtained by removing all incoming edges to $T$ and replacing the structural equation for $T$ with $T=t$, while keeping all other mechanisms and the individual's exogenous realization unchanged~\citep{pearl2013structural}. Our goal is to predict $Y^t$ from the individual's factual observations $\mathbf V=\mathbf v$ and the observational dataset
$\mathcal D_{\mathrm{obs}}$, without access to the underlying SCM or its causal graph.
Individual counterfactual outcomes are generally not identifiable from observational data without additional constraints on the data-generating process~\citep{bodik2026retrospective}. Different SCMs can induce the same observational distribution yet imply different values of $Y^t$ for an individual with factual observations $\mathbf V=\mathbf v$. To constrain the possible data-generating processes, we consider fully observed additive noise models (ANMs), as formalized below. Notice that this assumption restricts the class of admissible SCMs but does not by itself guarantee counterfactual identifiability.

\begin{assumption}[Fully observed additive noise models (ANMs)]
\label{ass:additive_noise}
The causal graph $G$ is acyclic, and all endogenous variables
$\mathbf V$ are observed. For each node $i$, the structural function satisfies
\begin{equation}
    F_i\!\left(\mathbf V_{\operatorname{pa}_G(i)},U_i; \theta \right)
    =
    f_i\!\left(\mathbf V_{\operatorname{pa}_G(i)};\theta \right)+U_i,
    \qquad i=1,\ldots,d,
\end{equation}
where $f_i$ is a deterministic function and $U_1,\ldots,U_d$ are mutually independent.
\end{assumption}
The following discussion assumes the regularity and support
conditions specified in Appendix~\ref{app:counterfactual_identifiability},
which also contains the formal statement and proof.

\begin{proposition}[Individual counterfactual identifiability, informal]
\label{prop:counterfactual_identifiability}
Within the class of SCMs satisfying
Assumption~\ref{ass:additive_noise},
the observational distribution $P_{\mathbf V}$ and a compatible
causal graph $G$ uniquely determine the counterfactual outcome
$y_G^t$ for given factual observations $\mathbf v$ and intervention
$\operatorname{do}(T=t)$.
Consequently, if $G$ is identifiable from $P_{\mathbf V}$
within this model class, the individual counterfactual outcome
is identifiable.
\end{proposition}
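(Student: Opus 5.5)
The plan is to show that, once $G$ is fixed and compatible with $P_{\mathbf V}$, every ingredient of the abduction--action--prediction procedure is a functional of $P_{\mathbf V}$ and $G$ alone. Fix a compatible $G$, meaning there exist parameters $\theta$ such that $(G,\theta)$ satisfies Assumption~\ref{ass:additive_noise} and induces $P_{\mathbf V}$.

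\textbf{Step 1: the mechanisms are identified.} For each node $i$, acyclicity together with mutual independence of the noises gives $U_i \perp \mathbf V_{\operatorname{pa}_G(i)}$, since the parents are functions of the noises of their ancestors only. Hence
\begin{equation*}
\mathbb E[V_i \mid \mathbf V_{\operatorname{pa}_G(i)}] = f_i(\mathbf V_{\operatorname{pa}_G(i)}) + \mathbb E[U_i].
\end{equation*}
So $f_i$ is determined by $P_{\mathbf V}$, on the support of the parents, up to an additive constant $c_i$. I will absorb $c_i$ into $U_i$. This is harmless: the counterfactual computation only ever uses the sum $f_i + U_i$, so it is invariant to shifting $c_i$ between $f_i$ and $U_i$. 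Alternatively, the regularity conditions in the appendix may normalize $\mathbb E[U_i]=0$. The conclusion is that for any two parameterizations $\theta,\theta'$ compatible with $(G,P_{\mathbf V})$, we have $f_i(\cdot;\theta)-f_i(\cdot;\theta') \equiv \text{const}$ on the support. This is where the support conditions are needed: to determine $f_i$ on the relevant region, and in particular at the intervened parent values reached later.

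\textbf{Step 2: abduction.} Given the full factual vector $\mathbf v$, the noises are recovered exactly as
\begin{equation*}
u_i = v_i - f_i(\mathbf v_{\operatorname{pa}_G(i)}),
\end{equation*}
and this recovery is consistent with the same constant convention as in Step 1. Because all variables are observed, the posterior over $\mathbf U$ given $\mathbf v$ is a point mass, and no further identification of the noise distribution is needed.

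\textbf{Step 3: action and prediction.} Replace the equation for $T$ with $T=t$, then propagate in a topological order of $G$:
\begin{equation*}
v_i^t = f_i(\mathbf v^t_{\operatorname{pa}_G(i)}) + u_i = f_i(\mathbf v^t_{\operatorname{pa}_G(i)}) - f_i(\mathbf v_{\operatorname{pa}_G(i)}) + v_i .
\end{equation*}
This expression involves only differences of $f_i$, which are identified, so $y_G^t = v_k^t$ is uniquely determined. Descendants of $T$ are updated by this recursion; every other node keeps $v_i^t = v_i$.

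\textbf{Obstacle and consequence.} The main obstacle is Step 1 on the intervened values. The counterfactual parent values $\mathbf v^t_{\operatorname{pa}_G(i)}$ must lie in the support of $P_{\mathbf V_{\operatorname{pa}_G(i)}}$, with $f_i$ continuous there, so that the conditional expectation pins down $f_i$ pointwise rather than only almost everywhere. I would invoke the appendix's regularity and support assumptions here, for example full support together with continuity of the $f_i$. The ``consequently'' clause then follows directly: if $G$ is identifiable from $P_{\mathbf V}$ within the ANM class, only one compatible graph exists, and so the outcome $y^t_G$ is unique.
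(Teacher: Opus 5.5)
Your proposal is correct and follows essentially the same route as the paper's proof. The shared steps are: independence $U_i\perp\mathbf V_{\operatorname{pa}_G(i)}$ identifies $f_i$ up to an additive constant through the conditional mean; continuity upgrades this almost-sure equality to all of the parent support $S_i$; exact abduction shifts $u_i$ by the same constant; and the constants cancel when you propagate $\operatorname{do}(T=t)$ in topological order, provided the counterfactual parent values lie in $S_i$. Your formula $v_i^t=v_i+f_i(\mathbf v^t_{\operatorname{pa}_G(i)})-f_i(\mathbf v_{\operatorname{pa}_G(i)})$ is a compact rewriting of the paper's two-model induction, though it tacitly uses integrability of $V_i$ and $U_i$ and $\mathbf v\in\operatorname{supp}(P_{\mathbf V})$ (so the factual parent values lie in $S_i$); the formal version assumes both explicitly.
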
 
When $P_{\mathbf V}$ is known exactly,
Proposition~\ref{prop:counterfactual_identifiability}
implies that each compatible graph determines a unique
counterfactual outcome for a given individual and intervention.
Any remaining counterfactual uncertainty therefore arises
from disagreement between compatible graphs. 

Let $\Pi$ be a prior over the SCMs considered above, and let
$\{G_1,\ldots,G_K\}$ denote the graphs compatible with
$P_{\mathbf V}$ that have positive posterior probability.
Write $\mathcal Y(\mathbf v,t;G,\theta)$ for the counterfactual
outcome of $Y$ under $\operatorname{do}(T=t)$ for an individual
with factual observations $\mathbf v$ in the SCM $(G,\theta)$.
For each compatible graph $G$, let $\theta_G(P_{\mathbf V})$
denote the parameters of any SCM with graph $G$ that induces
$P_{\mathbf V}$.
The Bayesian counterfactual posterior is
\begin{equation}
p^\star_\Pi(\,\cdot\mid\mathbf v,t,P_{\mathbf V})
=
\sum_{i=1}^{K}
p_\Pi(G_i\mid P_{\mathbf V})\,
\delta_{\mathcal Y
(\mathbf v,t;G_i,\theta_{G_i}(P_{\mathbf V}))},
\label{eq:counterfactual_population_posterior}
\end{equation}
where $\delta_a$ denotes a point mass at $a$. The mixture weights are determined by both $P_{\mathbf V}$ and the specified SCM prior $\Pi$.
 When the graph is identifiable from $P_{\mathbf V}$,
$K=1$ and the posterior reduces to a single point mass.
When multiple graphs have positive posterior probability, i.e., $K \ge 2$, 
the posterior is a weighted mixture of their corresponding
point masses.
If these graphs imply different counterfactual outcomes,
the posterior preserves uncertainty that the observational
distribution alone cannot resolve.
Counterfactual prediction methods should therefore account
for this uncertainty rather than assume that a single
estimated graph is correct.

%% file: data/3method.tex
\section{Method}
\vspace{-2pt}
\begin{figure}[t]
    \centering
    \includegraphics[width=0.9\linewidth]{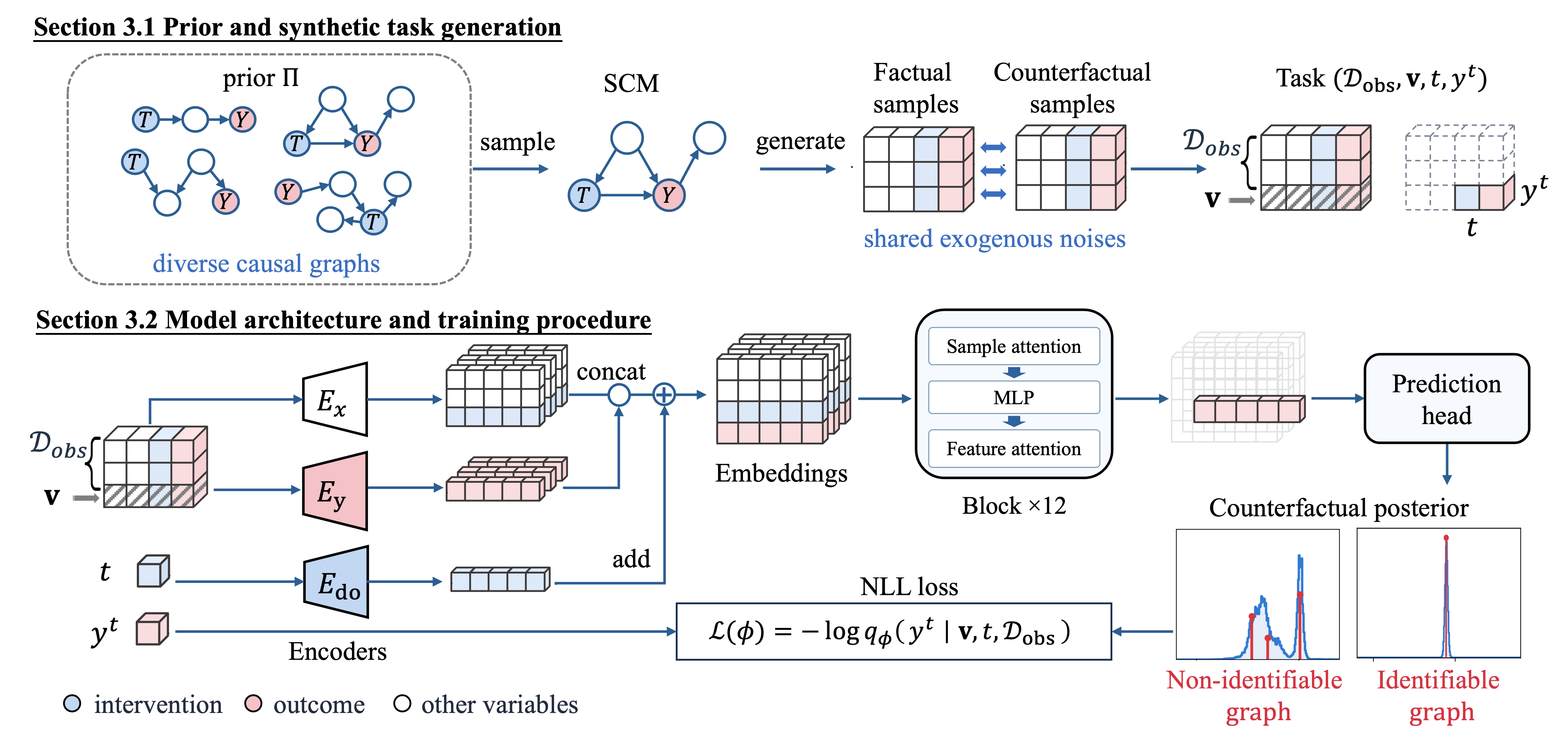}
    \caption{Overview of CAFE.
   \small{Top: We sample SCMs with diverse DAG structures from the prior $\Pi$ and generate paired factual and counterfactual data with shared exogenous noise. Bottom: We train a Transformer on these tasks to approximate the counterfactual posterior predictive distribution from observational data $\mathcal D_{\mathrm{obs}}$, an individual's factual observations $\mathbf v$, and an intervention $t$.}}
    \label{fig:framewrok} 
    \vspace{-3pt}
\end{figure} 
Section~\ref{sec:problem_setup} shows that disagreement between
compatible graphs can leave counterfactual outcomes uncertain
even when $P_{\mathbf V}$ is known exactly.
In practice, we have access only to a finite observational
dataset $\mathcal D_{\mathrm{obs}}$, which introduces additional
uncertainty about both the graph $G$ and the SCM parameters
$\theta$.
The corresponding Bayesian counterfactual posterior predictive
distribution is
\begin{equation}
p^\star_\Pi(\,\cdot\mid\mathbf v,t,\mathcal D_{\mathrm{obs}})
=
\sum_{G\in\mathcal G_\Pi}
p_\Pi(G\mid \mathbf v, \mathcal D_{\mathrm{obs}})
\int_{\Theta_G}
\delta_{\mathcal Y(\mathbf v,t;G,\theta)}\,
p_\Pi(d\theta\mid \mathbf v, G,\mathcal D_{\mathrm{obs}}).
\label{eq:post_finite}
\end{equation}
Here, $\mathcal G_\Pi$ denotes the graphs supported by the prior $\Pi$, and $\Theta_G$ is the
SCM parameter space for graph $G$.
Under Assumption~\ref{ass:additive_noise}, the factual observations
$\mathbf v$ uniquely determine the individual's noise realization
for each fixed SCM $(G,\theta)$.
Keeping this realization unchanged under intervention yields
the deterministic outcome $\mathcal Y(\mathbf v,t;G,\theta)$.
Eq.~\ref{eq:post_finite} averages the corresponding point
masses over the posterior uncertainty in $G$ and $\theta$. For each fixed intervention value $t$, under the prior $\Pi$, this posterior converges weakly almost surely
to the population posterior in Eq.~\ref{eq:counterfactual_population_posterior}
as the observational sample size tends to infinity. The conditions and proof are provided in Appendix~\ref{app:counterfactual_posterior_limit}.
\vspace{-2pt}
Exact evaluation of Eq.~\ref{eq:post_finite} is generally intractable. In particular, the posterior distributions over the causal graph $G$ and mechanism parameters $\theta$ are typically not available in closed form, and computing the posterior predictive distribution requires marginalization over a combinatorial space of graph structures and a potentially high-dimensional space of mechanism parameters.
As illustrated in Figure~\ref{fig:framewrok}, we amortize this computation by training a counterfactual posterior predictor $q_\phi(y^t\mid\mathbf v,t,\mathcal D_{\mathrm{obs}})$ on synthetic counterfactual tasks.
Each task $(\mathcal D_{\mathrm{obs}},\mathbf v,t,y^t)$ is generated from an SCM sampled from $\Pi$.
The predictor receives the observational dataset $\mathcal D_{\mathrm{obs}}$, factual state $\mathbf v$, intervention $t$ and learns to predict the counterfactual outcome $y^t$. Let $P_\Pi$ denote the joint distribution induced by the prior.
We train $q_\phi$ to learn the counterfactual posterior predictive distribution by minimizing the expected negative log-likelihood:
\begin{equation}
\mathcal L(\phi)
=
\mathbb E_{(\mathcal D_{\mathrm{obs}},\mathbf v,t,y^t)\sim P_\Pi}
\left[
-\log q_\phi(y^t\mid\mathbf v,t,\mathcal D_{\mathrm{obs}})
\right].
\label{eq:counterfactual_training_objective}
\end{equation}
Thus, with sufficient model capacity and exact optimization of the population objective, the optimal predictor matches the Bayesian posterior predictive distribution for almost every input under $P_\Pi$.
At inference time, the trained model takes an observational dataset, an individual's factual state, and a specified intervention, and approximates this distribution in a single forward pass.
\vspace{-2pt}
\subsection{Prior and Synthetic Task Generation}
\label{sec:prior_generation}
\vspace{-2pt}
We instantiate prior $\Pi$ through a sampling procedure over causal graphs, structural functions, and exogenous distributions.
Below, we describe the main components of the prior and the task-generation procedure. Sampling hyperparameters are provided in Appendix~\ref{app:prior_data_generation}.
\vspace{-2pt}
\paragraph{Causal graphs $G$.}
We first sample the number of nodes $d$ and a target average in-degree $\bar{k}$. We then sample a random node ordering and select $\min\{\lfloor d\bar{k}\rfloor, d(d-1)/2\}$ edges uniformly without replacement from those pointing forward in this ordering, ensuring the graph is acyclic.
To cover diverse causal relationships, we randomly select two distinct nodes as the intervention variable $T=V_j$ and the outcome variable $Y=V_k$. Unlike the back-door priors used by~\cite{balazadeh2026causalpfn}, our prior accommodates a broader range of practical settings where the remaining observed variables include treatment descendants, rather than only pre-treatment covariates. For example, when predicting coronary events ($Y$) under simvastatin treatment ($T$), a patient's factual observations may include LDL cholesterol levels measured during treatment ($X_1$), which are themselves affected by the medication~\citep{4S}. A different configuration arises when predicting progression to severe acute kidney injury ($Y$) under an alternative furosemide dose ($T$)~\citep{chawla2013development}. Urine output following treatment ($X_1$) depends on both the dose and the extent of renal tubular injury ($X_2$), which also influences subsequent disease progression. These relationships form the path $T\to X_1\leftarrow X_2\to Y$. By including such causal structures, our prior accounts for the diverse causal roles of observed variables. An ablation study in Appendix~\ref{app:albation} shows that restricting pretraining to back-door structures can reduce prediction accuracy on other causal structures.
\vspace{-3pt}
\paragraph{Structural functions and exogenous distributions $\theta$.}
For each SCM, we assign each root node $V_i=U_i$ a distribution chosen from the Gaussian, uniform, and Zipf families, then sample root-node values independently from their assigned distributions. For each non-root node $i$, we instantiate $f_i$ as a randomly initialized MLP with $|\operatorname{pa}_G(i)|$ inputs and one scalar output. We then sample a signal-to-noise ratio (SNR) to vary the strength of the additive noise relative to the structural signal.
The noise family is selected from Gaussian, Laplace, Gumbel, uniform, and exponential distributions; we also include a noiseless option with $U_i=0$.
These choices allow the prior to represent both symmetric and asymmetric disturbances, as well as deterministic mechanisms.
\vspace{-3pt}
\paragraph{Factual and counterfactual generation.}
Given the sampled SCM $\mathcal M=(G,\theta)$, we jointly generate factual and counterfactual samples in topological order. At each non-intervened node, we apply the same structural function to the factual and counterfactual parent values, sharing one noise realization within each sample pair: 
\vspace{-2pt}
\begin{equation}
v_i = f_i\left(\mathbf v_{\operatorname{pa}_G(i)}\right)+u_i,\qquad
v_i^t = f_i\left(\mathbf v^t_{\operatorname{pa}_G(i)}\right)+u_i,
\qquad i\neq j,
\label{eq:prior_counterfactual_generation}
\end{equation}
where $f_i \equiv 0$ for root nodes.
Before selecting the intervention value, we randomly assign individuals
to an observational group and a target group.
Upon reaching the treatment node $T=V_j$, we generate its factual
values for all individuals and sample an intervention value $t$
uniformly from the factual treatment values in the observational group.
We then set intervention value $v_j^t=t$ for all individuals and continue the traversal, propagating the intervention to generate the counterfactual values.
The factual observations of the observational group form
$\mathcal D_{\mathrm{obs}}$.
Each target individual's factual state $\mathbf v$ and paired
counterfactual outcome $y^t=v_k^t$ yield a training task
$(\mathcal D_{\mathrm{obs}},\mathbf v,t,y^t)$.
\vspace{-2pt}
\subsection{Model architecture and training procedure} 
\vspace{-3pt}
We implement $q_\phi$ using a tabular Transformer as illustrated in Figure~\ref{fig:framewrok} adapted for counterfactual prediction. The model takes an observational dataset $\mathcal D_{\mathrm{obs}}$, an individual's complete factual observations $\mathbf v$ (including the observed outcome $y$), and an intervention value $t$. It predicts the distribution of that same individual's outcome $Y^t$ under $\operatorname{do}(T=t)$. 
\vspace{-3pt}
\paragraph{Input encoding.}
Let $d$ denote the number of observed variables, including the treatment $T=V_j$ and outcome $Y=V_k$. We form an $(N+M)\times d$ input table, where the first $N$ rows contain observations from $\mathcal D_{\mathrm{obs}}$ and the remaining $M$ rows contain the factual observations of the individuals whose counterfactual outcomes we predict. Three separate two-layer MLP encoders map scalar inputs to 192-dimensional embeddings. Both $E_x$ and $E_y$ are applied to all $N+M$ rows: $E_x$ is shared across all factual variables other than $Y$, while $E_y$ encodes factual outcomes. We apply $E_{\mathrm{do}}$ only to the intervention values specified for the last $M$ rows and add the resulting embeddings to the corresponding factual treatment embeddings. This associates the intervention with its target variable while retaining the individual's observed treatment and outcome, yielding an embedding tensor of shape $(N+M)\times d\times192$. 
\vspace{-3pt}
\paragraph{Transformer backbone.}
We use the 12-layer Transformer architecture of LimiX~\citep{zhang2025limix}. Each block applies sample attention, a position-wise MLP, and feature attention in sequence (the SMF configuration), with residual connections and layer normalization. Sample attention gathers information from the $N$ observational rows separately for each variable. Feature attention combines information across variables within the same row. Together, these operations combine each individual's factual observations and intervention with information from $\mathcal D_{\mathrm{obs}}$.  
\vspace{-3pt}
\paragraph{Training procedure.}
We jointly train the encoders, Transformer backbone, and an MLP prediction head on synthetic tasks generated from $\Pi$ using the procedure described in Section~\ref{sec:prior_generation}. We discretize the continuous counterfactual outcome space into $B=5000$ bins using the boundaries provided by TabPFN2.5~\citep{grinsztajn2025tabpfn}. For each individual, the prediction head outputs a probability for each bin, representing the probability that their counterfactual outcome falls within that interval. We minimize the negative log-probability assigned to the bin containing the counterfactual outcome, averaged over the $M$ individuals. This is the discretized version of the objective in Eq.~\ref{eq:counterfactual_training_objective}. Training hyperparameters are provided in Appendix~\ref{app:hyperparameters}.

%% file: data/4experiments.tex
\section{Experiments} 
\vspace{-2pt}
We design our experiments to answer four questions: (1) Can CAFE accurately predict individual counterfactual outcomes when the causal graph is identifiable and the conditions of Proposition~\ref{prop:counterfactual_identifiability} hold (Section~\ref{sec:exp_csuite})? (2) Within the same class of Assumption~\ref{ass:additive_noise}, can it approximate the counterfactual posterior predictive distribution when observationally equivalent graphs imply different counterfactual outcomes, preserving the uncertainty that observational data alone cannot resolve (Section~\ref{sec:exp_uncertainty})? (3) How robust is its predictive performance in realistic settings where unobserved variables or non-additive noise may violate our modeling assumptions (Section~\ref{sec:realistic})? (4) Can methods for estimating treatment effects be used to predict individual counterfactual outcomes (Appendix~\ref{app:cepo})?
\vspace{-2pt}
\paragraph{Baselines} We compare CAFE against 7 methods with different approaches to causal modeling and inference. All methods are evaluated without access to the ground-truth graph or causal ordering. CAREFL~\citep{khemakhem2021caurefl} and CausalNF~\citep{javaloy2023causalNF} start from a partially directed graph estimated using PC: CAREFL orients the remaining edges using its likelihood-ratio criterion, while CausalNF models unresolved components as blocks with fixed internal orderings. FiP predicts a causal ordering using a pretrained ordering model. DECI~\citep{geffner2022DECI} and DiBS~\citep{lorch2021dibs} take a Bayesian approach to causal structure learning: DECI learns a variational distribution over graphs together with point estimates of additive-noise mechanisms, while DiBS jointly approximates a posterior over graphs and mechanism parameters. We also evaluate VCI~\citep{wu2025counterfactual} and SCIGAN~\citep{bica2020estimating}, which assume that observed covariates suffice to control treatment--outcome confounding, avoiding explicit graph estimation. Implementation details for all baselines are provided in Appendix~\ref{app:baselines}. 
\vspace{-2pt}
\paragraph{Evaluation protocol.}
Across all experiments, we randomly partition the individuals in each dataset into two equal folds. We use the factual observations from one fold as an observational dataset of size $N$ and evaluate counterfactual predictions for the $M=N$ individuals in the other, then swap their roles so that every individual is evaluated once. We repeat this procedure with five partition seeds and report the mean and standard deviation of each evaluation metric across seeds.
\subsection{Counterfactual Prediction under Identifiability}
\label{sec:exp_csuite} 
\vspace{-2pt}
\paragraph{C-Suite}
We first examine whether CAFE can accurately predict identifiable individual counterfactual outcomes under our causal assumptions. We evaluate individual counterfactual prediction on C-Suite~\citep{geffner2022DECI}, a synthetic benchmark covering classical causal structures and inference scenarios. We consider additive-noise settings with at least three variables. For each dataset, the causal graph is uniquely determined by the exact observational distribution within the class of faithful independent ANMs with $C^3$ mechanisms. The remaining conditions of Proposition~\ref{prop:counterfactual_identifiability} also hold, making each individual's counterfactual outcome identifiable from the observational distribution, factual observations, and intervention (see Appendix~\ref{app:iden_csuite} for proofs). These are population-level guarantees; our experiments assess how accurately the outcomes can be predicted from finite observational data. Additional evaluations on C-Suite settings outside the additive-noise assumption are reported in Appendix~\ref{app:non_add_csuite}.  
\begin{table}[t]
    \centering
    \small
    \setlength{\tabcolsep}{3pt}
    \caption{Counterfactual prediction RMSE on 7 C-Suite datasets
    with additive noise (mean $\pm$ std).
    Bold indicates the lowest mean RMSE in each column, excluding the factual-outcome baseline.}
    \label{tab:csuite_results}
    \begin{tabular}{lccccccc}
        \toprule
        Method
        & \makecell{collider \\ lingauss}
        & \makecell{fork \\ lin nongauss}
        & \makecell{fork \\ nonlin gauss}
        & \makecell{large \\ backdoor}
        & \makecell{nonlin \\ simpson}
        & \makecell{symprod \\ simpson}
        & \makecell{weak \\ arrows} \\
        \midrule
        y factual
        & \meanstd{0.57}{0.00}
        & \meanstd{0.00}{0.00}
        & \meanstd{0.00}{0.00}
        & \meanstd{0.74}{0.00}
        & \meanstd{1.36}{0.00}
        & \meanstd{1.37}{0.00}
        & \meanstd{1.07}{0.00} \\
        \midrule
        CAREFL
        & \meanstd{0.11}{0.15}
        & \meanstd{0.34}{0.00}
        & \meanstd{0.02}{0.00}
        & \meanstd{0.61}{0.10}
        & \meanstd{1.36}{0.00}
        & \meanstd{1.37}{0.00}
        & \meanstd{0.21}{0.00} \\
        CausalNF
        & \meanstd{0.08}{0.01}
        & \meanstd{0.55}{0.43}
        & \meanstd{0.09}{0.11}
        & \meanstd{1.57}{0.82}
        & \meanstd{2.48}{0.16}
        & \meanstd{1.37}{0.00}
        & \meanstd{1.86}{0.89} \\
        VCI
        & \meanstd{0.51}{0.00}
        & \meanstd{0.34}{0.00}
        & \meanstd{0.28}{0.01}
        & \meanstd{0.70}{0.01}
        & \meanstd{1.58}{0.12}
        & \meanstd{1.37}{0.01}
        & \meanstd{1.02}{0.01} \\
        SCIGAN
        & \meanstd{0.81}{0.01}
        & \meanstd{0.62}{0.04}
        & \meanstd{0.47}{0.00}
        & \meanstd{1.20}{0.06}
        & \meanstd{1.07}{0.09}
        & \meanstd{1.51}{0.05}
        & \meanstd{0.98}{0.03} \\
        DECI
        & \meanstd{0.39}{0.30}
        & \meanstd{0.60}{0.46}
        & \meanstd{0.03}{0.07}
        & \meanstd{0.42}{0.23}
        & \meanstd{3.04}{1.07}
        & \meanstd{1.45}{0.10}
        & \meanstd{0.42}{0.17} \\
        DiBS 
        & \meanstd{0.36}{0.08}
        & \meanstd{0.28}{0.07}
        & \meanstd{0.03}{0.03}
        & \meanstd{0.62}{0.14}
        & \meanstd{1.89}{0.24}
        & \meanstd{1.34}{0.10}
        & \meanstd{0.59}{0.14}\\
        Fip
        & \meanstd{0.40}{0.08}
        & \meanstd{0.89}{0.13}
        & \bestmeanstd{0.00}{0.00}
        & \meanstd{0.74}{0.00}
        & \meanstd{2.56}{0.57}
        & \meanstd{1.49}{0.34}
        & \meanstd{1.07}{0.00} \\
        CAFE
        & \bestmeanstd{0.05}{0.01}
        & \bestmeanstd{0.01}{0.00}
        & \meanstd{0.01}{0.00}
        & \bestmeanstd{0.15}{0.02}
        & \bestmeanstd{0.93}{0.01}
        & \bestmeanstd{1.20}{0.01}
        & \bestmeanstd{0.14}{0.00} \\
        \bottomrule
    \end{tabular}
\end{table}  
\vspace{-2pt}
\paragraph{Metric}
For each C-Suite dataset, we use $N=2000$. In these identifiable settings, individual counterfactual outcomes are uniquely determined, and the predictive distributions shown in Appendix~\ref{app:identifiable_distributions} concentrate sharply near these outcomes, supporting our focus on point-prediction accuracy. We use the posterior predictive mean as the point prediction and report counterfactual RMSE. We also include a factual-outcome baseline that predicts $\hat y^{t}=y$, providing a reference for whether a method improves upon retaining the observed outcome.
\vspace{-2pt}
\paragraph{Results} 
Table~\ref{tab:csuite_results} shows that CAFE achieves the lowest reported RMSE among seven baselines on six of the seven datasets, demonstrating strong individual counterfactual prediction across diverse causal structures. In the fork settings, e.g., $T\leftarrow X\rightarrow Y$, intervening on $T$ leaves the individual's outcome unchanged. CAFE yields an RMSE of approximately $0.01$ in both settings, closely preserving the factual outcome. CAFE also exhibits low variation across seeds on all seven datasets, whereas CAREFL, CausalNF, DECI, and FiP vary substantially in several settings. Although the causal graphs are identifiable from the exact observational distributions under the stated assumptions, this does not guarantee accurate recovery of their structures or mechanisms by a learning algorithm given finite data~\citep{rahmadi2017causality,faller2024self}. Errors in estimated edges, causal orderings, or mechanisms can affect counterfactual predictions. 

\subsection{Counterfactual uncertainty under non-identifiable causal structure.} 
\label{sec:exp_uncertainty} 
\begin{figure}[t]
    \centering
    \includegraphics[width=0.85\linewidth]{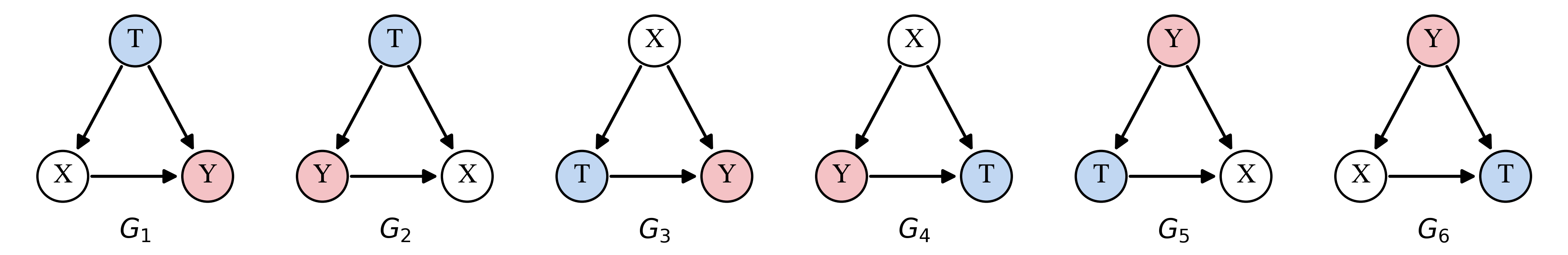}
    \par\vspace{0.0cm}
    \includegraphics[width=0.98\linewidth]{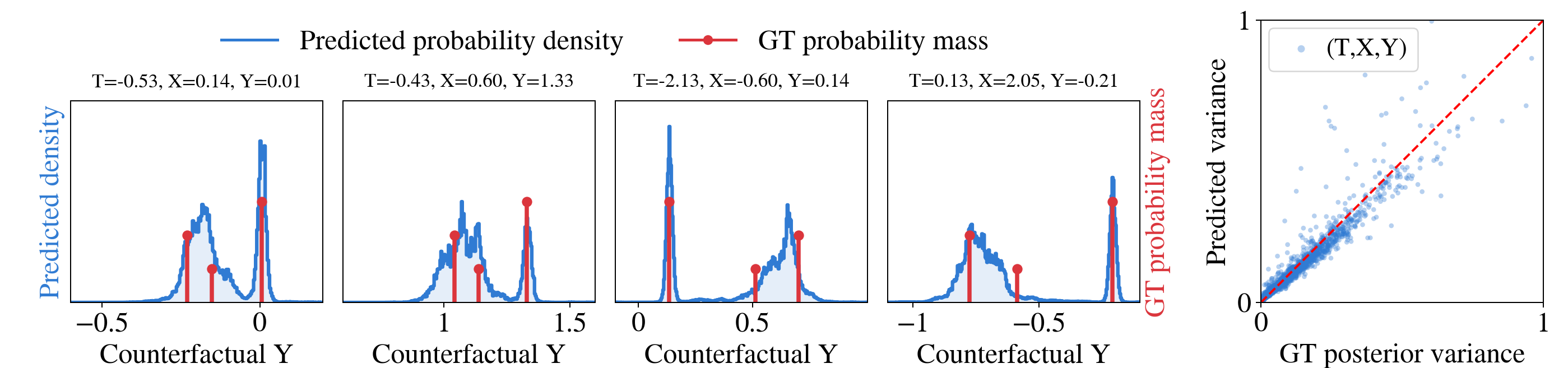}
    \caption{Counterfactual uncertainty under non-identifiable causal structure. \small{Top: six observationally indistinguishable SCMs. Bottom: predicted versus ground-truth counterfactual distributions for randomly selected individuals (left) and posterior variances across individuals (right) at $N=1000$.}}
    \label{fig:exp2_uncertainty}
\end{figure} 
\vspace{-2pt}
\paragraph{Observationally equivalent SCMs.}
We next evaluate whether CAFE can capture counterfactual uncertainty due to causal graph non-identifiability from observational data. We construct six fully observed linear Gaussian SCMs satisfying Assumption~\ref{ass:additive_noise} that induce the same observational distribution due to Markov equivalent causal structures. This construction provides an analytically tractable ground-truth posterior for evaluating uncertainty arising from observationally indistinguishable causal structures. Consider three observed variables $\mathbf V=(T,X,Y)$ with joint distribution
\begin{equation}
\mathbf V\sim\mathcal N(\mathbf 0,\Sigma),
\qquad
\Sigma=
\begin{pmatrix}
1 & 1/2 & 1/2\\
1/2 & 1 & 1/2\\
1/2 & 1/2 & 1
\end{pmatrix}.
\end{equation}
For each permutation $\pi$ of the three variables, we define an SCM with mutually independent exogenous noises:
\begin{equation}
\begin{aligned}
V_{\pi_1} &= U_{\pi_1},
& U_{\pi_1} &\sim \mathcal N(0,1),\\
V_{\pi_2} &= \tfrac12 V_{\pi_1}+U_{\pi_2},
& U_{\pi_2} &\sim \mathcal N(0,\tfrac34),\\
V_{\pi_3} &= \tfrac13 V_{\pi_1}+\tfrac13 V_{\pi_2}+U_{\pi_3},
& U_{\pi_3} &\sim \mathcal N(0,\tfrac23).
\end{aligned}
\end{equation}
These six complete DAGs are the only DAGs compatible
with $P_{\mathbf V}$ under
Assumption~\ref{ass:additive_noise}.
The six SCMs differ only in the ordering of $T$, $X$, and $Y$;
their structural coefficients and noise distributions follow
the same specification in each ordering.
Our prior samples these orderings uniformly and applies the
same rules for generating mechanisms and noise distributions
in each ordering.
Permuting the variables leaves $P_{\mathbf V}$ unchanged,
so conditioning on $P_{\mathbf V}$ preserves this symmetry.
The six compatible DAGs therefore receive equal posterior weights:
$p_\Pi(G_i\mid P_{\mathbf V})=1/6$ for $i=1,\ldots,6$.

For an individual with factual observations
$\mathbf v=(t_{\mathrm f},x,y)$, where $t_{\mathrm f}$ is
the observed treatment value, we compute the counterfactual
outcome under $\operatorname{do}(T=t)$ in each SCM while
keeping the individual's exogenous noise unchanged.
Substituting these outcomes and their weights into
Eq.~\ref{eq:counterfactual_population_posterior} gives
\begin{equation}
p^\star_\Pi(\,\cdot\mid\mathbf v,t,P_{\mathbf V})
=
\underbrace{\tfrac12\delta_y}_{G_4,\,G_5,\,G_6}
+
\underbrace{\tfrac13\delta_{y+\frac12(t-t_{\mathrm f})}}_{G_1,\,G_2}
+
\underbrace{\tfrac16\delta_{y+\frac13(t-t_{\mathrm f})}}_{G_3}.
\label{eq:ground_truth_counterfactual_posterior}
\end{equation}
We evaluate how closely CAFE's predictions from finite
observational dataset approximate this analytical population
posterior as the observational dataset size increases.
\vspace{-2pt}
\paragraph{Metrics} For each observational dataset size $N\in\{50,100,250,500,1000\}$, we set $t=-1$.
We assess first- and second-order accuracy using posterior mean RMSE and posterior variance RMSE, respectively. 
Beyond these moment summaries, we use the integrated quadratic distance (IQD)~\citep{Divergence} to measure the discrepancy between the model's continuous predictive distribution and the discrete ground-truth posterior: \begin{equation} \operatorname{IQD} = \frac{1}{2N}\sum_{i=1}^{2N} \int_{\mathbb R} \left[\widehat F_i(z)-F_i^{\mathrm{GT}}(z)\right]^2\,dz, \end{equation} where $\widehat F_i$ and $F_i^{\mathrm{GT}}$ are the predicted and ground-truth counterfactual CDFs for individual $i$. 
\vspace{-2pt}
\paragraph{Results}
\begin{wraptable}{r}{0.52\linewidth}
\vspace{-\baselineskip}
\centering
\small
\setlength{\tabcolsep}{4pt}
\caption{Counterfactual posterior prediction under non-identifiable
causal structure (mean $\pm$ std).}
\label{tab:posterior_prediction}
\resizebox{\linewidth}{!}{%
\begin{tabular}{rccc}
\toprule
$N$ & Mean RMSE & Variance RMSE & IQD \\
\midrule
50   & \meanstd{0.464}{0.120} & \meanstd{0.558}{0.132} & \meanstd{0.161}{0.050} \\
100  & \meanstd{0.342}{0.191} & \meanstd{0.565}{0.267} & \meanstd{0.104}{0.092} \\
250  & \meanstd{0.153}{0.048} & \meanstd{0.085}{0.017} & \meanstd{0.031}{0.017} \\
500  & \meanstd{0.110}{0.020} & \meanstd{0.073}{0.015} & \meanstd{0.020}{0.008} \\
1000 & \bestmeanstd{0.094}{0.012} & \bestmeanstd{0.055}{0.010} & \bestmeanstd{0.018}{0.004} \\
\bottomrule
\end{tabular}%
}
\end{wraptable}  
Table~\ref{tab:posterior_prediction} shows that CAFE's errors relative to the analytical population posterior generally decrease as the observational dataset grows. Additional observations help estimate the observational distribution more accurately, but cannot distinguish the six SCMs, which predict different counterfactual outcomes for the same individual. At $N=1000$, Figure~\ref{fig:exp2_uncertainty} (bottom left) shows that CAFE's predicted distributions closely match the analytical posteriors for the selected individuals. The variance comparison (bottom right) further supports the accuracy of CAFE's posterior variance estimates across individuals.
Additional comparisons with DECI and DiBS, two Bayesian methods that can model graph uncertainty, show that CAFE achieves the lowest errors on all three posterior metrics (Appendix~\ref{app:bayesian_posterior_comparison}). Together, these results support CAFE's ability to approximate the ground-truth counterfactual posterior and capture uncertainty due to graph non-identifiability.  
\subsection{Robustness to violations of modeling assumptions.}
\label{sec:realistic} 
\vspace{-2pt}
\paragraph{CausalMan.}
We evaluate predictive robustness on CausalMan~\citep{CausalMan}, a physics-based simulator modeled after a real production line that assembles magnetic valves into hydraulic units. Given a component's factual measurements, the task is to predict whether it would have passed quality inspection (\texttt{Sec\_C2\_Machine1\_ProcessResult}) under a different press-fitting force (\texttt{PF\_M1\_T1\_Force}). We use CausalMan-Micro, which simulates a single product and retains 24 observed variables after removing constants, and CausalMan-Small, which includes multiple products and 53 observed variables. Both have 104 unobserved variables. Both datasets contain mixed variable types, and their unobserved process variables and non-additive noise place them outside the fully observed ANMs assumption. Moreover, the observed variables include descendants of the intervention variable, so these tasks do not satisfy the common assumption that all covariates are unaffected by treatment.  
For each dataset, we use $N=1000$ observational samples and evaluate interventions at $t\in\{15000,30000\}$.
We threshold continuous counterfactual predictions at $0.5$ to obtain binary labels and report counterfactual prediction accuracy.
Table~\ref{tab:causalman_results} shows that CAFE achieves the highest accuracy in all four settings. Several baselines achieve accuracies close to those of the factual-outcome baseline.
CAREFL infers no directed path from the intervention variable to the outcome, while FiP places the outcome before the intervention variable in its inferred ordering. Both structural errors prevent the intervention from affecting the predicted outcome. On CausalMan-Small, DECI struggles to learn acyclic graphs; stronger acyclicity regularization yields overly sparse structures, with counterfactual accuracy matching the factual-outcome baseline. DiBS can recover correct causal relationships in both datasets, but inaccuracies in its learned structural equations limit its sensitivity to small deviations from the factual treatment value, as observed at $t=15000$. It is also computationally expensive, requiring approximately $48{,}000s$ on CausalMan-Small, compared with about $0.12s$ for CAFE (Appendix Table~\ref{tab:runtime_comparison}). VCI and SCIGAN also fail to improve counterfactual prediction over retaining the factual outcome. Both methods assume that covariates are unaffected by treatment, an assumption violated by the treatment descendants present in these datasets. Using the factual values of these descendants under intervention may explain why their predictions remain close to the factual outcome. These results demonstrate CAFE's robustness in realistic scenarios that violate the fully observed ANM assumptions underlying its training prior.

\begin{table}[t]
    \centering
    \small
    \setlength{\tabcolsep}{4pt}
    \caption{Counterfactual prediction performance on CausalMan and
    Sangiovese (mean $\pm$ std). We report accuracy
    (\%, higher is better) on CausalMan under two press-fitting force
    interventions, and MAPE (\%, lower is better) for Brix and pH
    on Sangiovese.}
    \label{tab:causalman_results}
    \begin{tabular}{lcccccc}
        \toprule
        Method
        & \multicolumn{2}{c}{CausalMan Micro}
        & \multicolumn{2}{c}{CausalMan Small}
        & \multicolumn{2}{c}{Sangiovese} \\
        & \multicolumn{2}{c}{Accuracy (\%) $\uparrow$}
        & \multicolumn{2}{c}{Accuracy (\%) $\uparrow$}
        & \multicolumn{2}{c}{MAPE (\%) $\downarrow$} \\
        \cmidrule(lr){2-3}
        \cmidrule(lr){4-5}
        \cmidrule(lr){6-7}
        & $t=15000$ & $t=30000$
        & $t=15000$ & $t=30000$
        & Brix & pH \\
        \midrule
        y factual
        & \meanstd{92.85}{0.00}
        & \meanstd{12.15}{0.00}
        & \meanstd{94.05}{0.00}
        & \meanstd{6.70}{0.00}
        & \meanstd{6.83}{0.00}
        & \meanstd{2.07}{0.00} \\
        \midrule
        CAREFL
        & \meanstd{92.85}{0.00}
        & \meanstd{12.15}{0.00}
        & \meanstd{94.05}{0.00}
        & \meanstd{6.70}{0.00}
        & \meanstd{2.52}{0.14}
        & \meanstd{1.88}{0.23} \\
        CausalNF
        & \meanstd{92.98}{0.26}
        & \meanstd{20.86}{17.42}
        & \meanstd{94.40}{0.54}
        & \meanstd{24.30}{21.61}
        & \meanstd{4.20}{0.25}
        & \meanstd{1.93}{0.17} \\
        VCI
        & \meanstd{92.79}{0.06}
        & \meanstd{12.09}{0.06}
        & \meanstd{94.04}{0.02}
        & \meanstd{6.69}{0.02}
        & \meanstd{5.70}{0.25}
        & \meanstd{1.83}{0.07} \\
        SCIGAN
        & \meanstd{91.48}{2.67}
        & \meanstd{12.15}{0.00}
        & \meanstd{92.80}{2.50}
        & \meanstd{6.68}{0.02}
        & \meanstd{6.58}{0.08}
        & \meanstd{2.76}{0.12} \\
        DECI
        & \meanstd{95.85}{0.79}
        & \meanstd{53.87}{14.58}
        & \meanstd{94.05}{0.00}
        & \meanstd{6.70}{0.00}
        & \meanstd{4.44}{2.14}
        & \meanstd{1.85}{0.28} \\ 
        DiBS 
        & \meanstd{92.85}{0.00}
        & \bestmeanstd{100.00}{0.00}
        & \meanstd{94.05}{0.00}
        & \bestmeanstd{100.00}{0.00}
        & \meanstd{5.12}{0.38}
        & \meanstd{1.77}{0.08} \\ 
        FiP
        & \meanstd{92.85}{0.00}
        & \meanstd{12.15}{0.00}
        & \meanstd{94.05}{0.00}
        & \meanstd{6.70}{0.00}
        & \meanstd{6.83}{0.00}
        & \meanstd{2.07}{0.00} \\
        CAFE
        & \bestmeanstd{98.05}{0.07}
        & \bestmeanstd{100.00}{0.00}
        & \bestmeanstd{97.98}{0.06}
        & \bestmeanstd{100.00}{0.00}
        & \bestmeanstd{1.99}{0.01}
        & \bestmeanstd{0.72}{0.02} \\ 
        \bottomrule
    \end{tabular}
\end{table}
\vspace{-2pt}
\paragraph{Sangiovese.}
We further evaluate CAFE on counterfactual samples generated from a Bayesian network fitted to vineyard experiments in Tuscany~\citep{magrini2017conditional}. The task is to predict how an alternative harvest time would have changed the sugar concentration (\texttt{Brix}) and acidity (\texttt{pH}) of an individual grape batch. These batches differ in vegetative vigour, crop load, and leaf condition. For each batch, we toggle the binary \texttt{HarvestTime} indicator, setting $t=1-t_{\mathrm f}$ to switch between standard and late harvesting. We use $N=1000$ samples to form the observational dataset and evaluate counterfactual predictions using mean absolute percentage error (MAPE).
Table~\ref{tab:causalman_results} shows that our model achieves the lowest MAPE on both Sangiovese outcomes, reaching $1.99\%$ for Brix and $0.72\%$ for pH. These correspond to relative error reductions of approximately $21\%$ and $59\%$ over the strongest competing baseline for each setting, respectively. CAFE also exhibits low variation across partition seeds, which reflects its ability to predict counterfactuals directly without relying on a separately estimated causal graph or ordering.

Taken together, CAFE retains strong individual counterfactual prediction performance across manufacturing and viticulture settings, even when unobserved variables or non-additive noise place the data-generating process outside its training assumptions. This provides empirical support for the practical applicability of counterfactual prediction learned from a synthetic SCM prior.

%% file: data/appendix.tex
\section{Proof} 

\subsection{Proof of Proposition~\ref{prop:counterfactual_identifiability}} 
\label{app:counterfactual_identifiability} 
\begin{proposition}[Individual counterfactual identifiability]
\label{prop:counterfactual_identifiability_formal}
Let $\mathfrak M$ be the class of SCMs satisfying
Assumption~\ref{ass:additive_noise}, and let
$\mathcal M\in\mathfrak M$ have graph $G$ and induce
$P_{\mathbf V}$.
For each node $i$, let
\[
S_i
:=
\operatorname{supp}
\bigl(P_{\mathbf V_{\operatorname{pa}_G(i)}}\bigr)
\]
denote the joint observational support of its parent variables.
Fix an intervention variable $T=V_j$, an outcome $Y=V_k$
with $j\neq k$, and a factual state
$\mathbf v\in\operatorname{supp}(P_{\mathbf V})$.
Write $\mathbf v^t$ for the counterfactual state under
$\operatorname{do}(T=t)$ in $\mathcal M$.
Suppose that:
\begin{enumerate}
    \item[(i)]
    $G$ is identifiable from $P_{\mathbf V}$
    within $\mathfrak M$;

    \item[(ii)]
    every model in $\mathfrak M$ inducing $P_{\mathbf V}$
    satisfies $\mathbb E|V_i|<\infty$,
    $\mathbb E|U_i|<\infty$, and continuity of $f_i$
    on $S_i$ for every node $i$;

    \item[(iii)]
    $\mathbf v^t_{\operatorname{pa}_G(i)}\in S_i$
    for every $i\neq j$.
\end{enumerate}
Then $\mathbf v^t$ is uniquely determined by
$(P_{\mathbf V},\mathbf v,j,t)$ within $\mathfrak M$.
In particular, the individual counterfactual outcome
$y^t=v_k^t$ is identifiable.
\end{proposition}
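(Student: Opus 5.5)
The plan is to compare two arbitrary models $\mathcal M=(G,\theta)$ and $\mathcal M'=(G',\theta')$ in $\mathfrak M$ that both induce $P_{\mathbf V}$, and to show that they produce the same counterfactual state $\mathbf v^t$. By (i), $G'=G$, so the two models differ only in their mechanisms and noise laws. The argument then has two stages. First, identify each mechanism $f_i$ on $S_i$ up to an additive constant. Second, propagate the factual noise through the intervened model in topological order, and check that these constants cancel.

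\textbf{Step 1 (mechanisms up to constants).} Fix a node $i$. Independence of the $U_l$ and acyclicity imply that $U_i$ is independent of $\mathbf V_{\operatorname{pa}_G(i)}$, because the parents are functions of the noises of non-descendants of $i$. Hence
\[
\mathbb E[V_i\mid \mathbf V_{\operatorname{pa}_G(i)}]=f_i(\mathbf V_{\operatorname{pa}_G(i)})+\mathbb E U_i
\]
almost surely, and the integrability in (ii) makes this well defined. The same identity holds in $\mathcal M'$ with $f_i',U_i'$. Since both sides are versions of the same conditional expectation under $P_{\mathbf V}$, we get $f_i-f_i'=c_i:=\mathbb E U_i'-\mathbb E U_i$ almost surely on $S_i$.

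The main obstacle is upgrading this almost-sure equality to pointwise equality on all of $S_i$, since (iii) requires evaluating the mechanisms at specific counterfactual parent values. My approach uses continuity from (ii): the set $\{f_i-f_i'\neq c_i\}\cap S_i$ is relatively open in $S_i$. Every nonempty relatively open subset of the support has positive probability, so this set must be empty. This gives $f_i=f_i'+c_i$ everywhere on $S_i$. For root nodes, $f_i\equiv0$ and the constant is absorbed into the noise in the same way.

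\textbf{Step 2 (abduction and propagation).} For the factual $\mathbf v\in\operatorname{supp}(P_{\mathbf V})$, abduction gives
\[
u_i=v_i-f_i(\mathbf v_{\operatorname{pa}_G(i)}), \qquad u_i'=v_i-f_i'(\mathbf v_{\operatorname{pa}_G(i)}).
\]
Here $\mathbf v_{\operatorname{pa}_G(i)}\in S_i$, because the projection of a support point lies in the support of the marginal. By Step 1, $u_i'=u_i+c_i$.

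Now induct over a topological order of the intervened graph. Set $v^t_j={v'}^t_j=t$. For $i\neq j$, assume the counterfactual parent values agree, $\mathbf v^t_{\operatorname{pa}_G(i)}={\mathbf v'}^t_{\operatorname{pa}_G(i)}$. By (iii) these values lie in $S_i$, so Step 1 applies pointwise:
\[
{v'}^t_i=f_i'(\mathbf v^t_{\operatorname{pa}_G(i)})+u_i'=f_i(\mathbf v^t_{\operatorname{pa}_G(i)})-c_i+u_i+c_i=v^t_i.
\]
Hence $\mathbf v^t$ coincides across all models in $\mathfrak M$ inducing $P_{\mathbf V}$, and in particular $y^t=v^t_k$ is identifiable.

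The informal Proposition~\ref{prop:counterfactual_identifiability} then follows by running the same argument with $G$ fixed, without invoking (i).
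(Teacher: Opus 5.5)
Your proposal is correct and follows essentially the same route as the paper. You identify each $f_i$ on $S_i$ up to the constant $\mathbb E U_i'-\mathbb E U_i$ via the conditional mean, upgrade almost-sure equality to equality everywhere on $S_i$ by continuity, and then let the constants cancel between abduction and the topological propagation, which condition (iii) keeps inside $S_i$. Your added details (relative openness of the disagreement set, projection of support points, and reducing the informal statement to fixed-graph subclasses) match the paper's reasoning.
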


\begin{proof}
We show that observationally equivalent additive mechanisms
differ only by constants on their observational supports,
and that these constants cancel when computing counterfactuals.
This cancellation is a form of invariance under invertible
reparameterizations of the exogenous
variables~\citep{nasr2023counterfactual}.

Let $\widetilde{\mathcal M}\in\mathfrak M$ be any model
inducing $P_{\mathbf V}$, with structural functions
$\widetilde f_i$ and exogenous variables $\widetilde U_i$.
By condition~(i), both models have the same graph $G$.
We show that their counterfactual states agree for the
given factual state and intervention.

In an acyclic SCM, the parents of node $i$ are functions
of exogenous variables other than $U_i$.
Mutual independence of the exogenous variables therefore gives
\[
U_i\perp\mathbf V_{\operatorname{pa}_G(i)}
\]
in $\mathcal M$, and the analogous independence holds
in $\widetilde{\mathcal M}$.
Define
\[
\mu_i=\mathbb E_{\mathcal M}[U_i],
\qquad
\widetilde\mu_i
=
\mathbb E_{\widetilde{\mathcal M}}[\widetilde U_i].
\]
By integrability and equality of the observational
distributions,
\begin{equation*}
f_i(\mathbf x)+\mu_i
=
\widetilde f_i(\mathbf x)+\widetilde\mu_i
\quad
P_{\mathbf V_{\operatorname{pa}_G(i)}}\text{-almost surely},
\end{equation*}
because both sides are versions of the same conditional mean.

Both sides are continuous on $S_i$ by condition~(ii).
If they differed at a point of $S_i$, continuity would imply
a difference on a relative neighborhood with positive
parent probability, contradicting their almost-sure equality.
Hence
\begin{equation}
\widetilde f_i(\mathbf x)
=
f_i(\mathbf x)+c_i,
\qquad
\mathbf x\in S_i,
\qquad
c_i:=\mu_i-\widetilde\mu_i.
\label{eq:mechanism_shift}
\end{equation}

Since $\mathbf v\in\operatorname{supp}(P_{\mathbf V})$,
its parent subvectors belong to the corresponding sets $S_i$.
The exogenous realizations recovered from $\mathbf v$
in the two models therefore satisfy
\begin{equation*}
\begin{aligned}
u_i
&=
v_i-f_i(\mathbf v_{\operatorname{pa}_G(i)}),\\
\widetilde u_i
&=
v_i-\widetilde f_i(\mathbf v_{\operatorname{pa}_G(i)})
=
u_i-c_i.
\end{aligned}
\end{equation*}
Let $\mathbf v^t$ and $\widetilde{\mathbf v}^{\,t}$
denote the counterfactual states obtained by keeping these
respective realizations fixed under $\operatorname{do}(T=t)$.

Proceed in a topological order of $G$.
At the intervened node, both models assign
$v_j^t=\widetilde v_j^t=t$.
At any other node $i$, suppose the two counterfactual states
agree on its parents; this condition is vacuous for a root.
Their common parent configuration equals
$\mathbf v^t_{\operatorname{pa}_G(i)}$, which belongs to
$S_i$ by condition~(iii).
Eq.~\ref{eq:mechanism_shift} therefore gives
\begin{equation*}
\begin{aligned}
\widetilde v_i^t
&=
\widetilde f_i
\left(\mathbf v^t_{\operatorname{pa}_G(i)}\right)
+\widetilde u_i\\
&=
f_i\left(\mathbf v^t_{\operatorname{pa}_G(i)}\right)
+c_i+u_i-c_i\\
&=
v_i^t.
\end{aligned}
\end{equation*}
Induction yields
$\widetilde{\mathbf v}^{\,t}=\mathbf v^t$.
Since $\widetilde{\mathcal M}$ was arbitrary,
all models in $\mathfrak M$ inducing $P_{\mathbf V}$
give the same counterfactual state and, in particular,
the same outcome $y^t=v_k^t$.
\end{proof}

To obtain Eq.~\ref{eq:counterfactual_population_posterior},
apply the proposition separately to each fixed-graph
subclass of SCMs supported by $\Pi$.
Within each subclass, condition~(i) is automatic.
Assuming conditions~(ii)--(iii) hold for the given
observational distribution and counterfactual query
in every subclass with positive graph posterior probability,
each graph $G$ determines a unique outcome $y_G^t$.
The conditional counterfactual posterior given $G$
is therefore a point mass at $y_G^t$.
Averaging these point masses with the graph posterior
probabilities gives the stated mixture.

\subsection{Population limit of the counterfactual posterior}

\label{app:counterfactual_posterior_limit}
\begin{proposition}[Population limit of the counterfactual posterior]
\label{prop:counterfactual_posterior_limit}
Let $\Pi$ be a prior over fully observed, acyclic ANMs with
mutually independent exogenous noises.
Suppose the SCM parameter spaces are standard Borel, the
observational and counterfactual maps are measurable, and
$\mathbf v$ and the observational samples are conditionally
i.i.d.\ given the SCM.
Fix an intervention value $t$.
Assume that, almost surely under the joint model induced by
$\Pi$, conditions (ii) and (iii) of the formal identifiability
proposition in Appendix~A.1 hold for $P_{\mathbf V}$ and
$(\mathbf v,t)$ within every fixed-graph subclass with
$p_\Pi(G\mid P_{\mathbf V})>0$.
Then
$$
    p^\star_\Pi(\,\cdot\mid\mathbf v,t,\mathcal D_{\mathrm{obs}})
    \;\Rightarrow\;
    p^\star_\Pi(\,\cdot\mid\mathbf v,t,P_{\mathbf V})
$$
as $n\to\infty$, almost surely under this joint model.
Here, $\Rightarrow$ denotes weak convergence.
Graph identifiability is not required.
\end{proposition}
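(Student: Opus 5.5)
We prove the claim by a martingale convergence argument (Doob/Lévy upward theorem) applied to the counterfactual outcome, after reducing the posterior to a functional of the unknown SCM. Write $\mathcal M=(G,\theta)$ for the random SCM drawn from $\Pi$, and set $Z:=\mathcal Y(\mathbf v,t;G,\theta)$. Because the parameter spaces are standard Borel and the counterfactual map is measurable, $Z$ is a real random variable on the joint probability space. Eq.~\ref{eq:post_finite} is exactly the regular conditional distribution $\mathcal L(Z\mid \mathbf v,\mathcal D_{\mathrm{obs}})$ for fixed $t$; write $\mathcal D_n$ for $\mathcal D_{\mathrm{obs}}$ with $n$ samples. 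The target $p^\star_\Pi(\cdot\mid\mathbf v,t,P_{\mathbf V})$ is, by the construction after Proposition~\ref{prop:counterfactual_identifiability_formal}, the conditional law $\mathcal L(Z\mid \mathbf v,P_{\mathbf V})$. The argument therefore has three steps.

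\textbf{Step 1 (martingale convergence).} Let $\mathcal F_n=\sigma(\mathbf v,\mathbf v^{(1)},\ldots,\mathbf v^{(n)})$, with $\mathcal F_\infty=\bigvee_n\mathcal F_n$. For each bounded continuous $h$ in a countable convergence-determining class (for instance $h(z)=\cos(qz)$ and $h(z)=\sin(qz)$ with $q\in\mathbb Q$), the sequence $\mathbb E[h(Z)\mid\mathcal F_n]$ is a bounded martingale. L\'evy's upward theorem gives almost sure convergence to $\mathbb E[h(Z)\mid\mathcal F_\infty]$. Taking a countable intersection of the null sets, the characteristic functions converge at all rational $q$. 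Because they are uniformly equicontinuous (tightness follows from the same martingale argument applied to $\mathbf 1\{|Z|>m\}$, or directly from the Lipschitz bound in $q$ given tightness), we obtain $\mathcal L(Z\mid\mathcal F_n)\Rightarrow\mathcal L(Z\mid\mathcal F_\infty)$ almost surely.

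\textbf{Step 2 (identify $\mathcal F_\infty$ with $P_{\mathbf V}$).} We show $\mathcal L(Z\mid\mathcal F_\infty)=\mathcal L(Z\mid\mathbf v,P_{\mathbf V})$ almost surely. Since the samples are conditionally i.i.d.\ given $\mathcal M$, the Glivenko--Cantelli/Varadarajan theorem makes the empirical measure converge weakly to $P_{\mathbf V}$ almost surely. Hence $P_{\mathbf V}$, viewed as a random element of the Polish space of probability measures, is $\mathcal F_\infty$-measurable modulo null sets. Conversely, conditionally on $\mathcal M$ the tail of the data sequence is an exchangeable i.i.d.\ sequence whose law depends on $\mathcal M$ only through $P_{\mathbf V}$, and the same holds for $\mathbf v$. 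By de Finetti/Hewitt--Savage reasoning, $\mathcal M$ (and hence $Z$) is conditionally independent of $\mathcal F_\infty$ given $(\mathbf v,P_{\mathbf V})$. This yields the identification.

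\textbf{Step 3 (evaluate the limit).} By the almost-sure hypothesis that conditions (ii) and (iii) hold in every fixed-graph subclass with positive posterior weight, Proposition~\ref{prop:counterfactual_identifiability_formal}, applied within each such subclass where condition (i) is automatic, shows that $Z$ is a.s.\ constant given $(G,\mathbf v,P_{\mathbf V})$, equal to $y^t_G$. Disintegrating over $G$ then gives $\mathcal L(Z\mid\mathbf v,P_{\mathbf V})=\sum_G p_\Pi(G\mid P_{\mathbf V})\delta_{y_G^t}$. The weights depend on $\mathbf v$ only through $P_{\mathbf V}$, since $\mathbf v$ is a fresh draw from $P_{\mathbf V}$ and its likelihood is identical across graphs that induce $P_{\mathbf V}$. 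This is Eq.~\ref{eq:counterfactual_population_posterior}.

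\textbf{Main obstacle.} The delicate step is the conditional independence in Step 2 together with the almost-sure, measure-theoretic formulation of ``constant given $G$'' in Step 3. Proposition~\ref{prop:counterfactual_identifiability_formal} is a pointwise statement about models, and it must be transferred to a statement holding almost surely under the joint law. We would handle this by defining $y^t_G$ as a measurable function of $(G,P_{\mathbf V},\mathbf v)$ on the good set where (ii) and (iii) hold, and noting that $Z=y^t_G$ on that full-measure set. Measurability of this function would follow by expressing it as a limit of conditional expectations, which is exactly what the martingale argument provides.
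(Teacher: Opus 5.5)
Your proposal is correct and follows essentially the same route as the paper. It applies L\'evy's upward theorem to $\mathcal L(Y^t\mid\mathcal F_n)$ over a countable convergence-determining family, identifies $\mathcal L(Y^t\mid\mathcal F_\infty)$ with $\mathcal L(Y^t\mid\mathbf v,P_{\mathbf V})$ using almost-sure weak convergence of the empirical measure and the fact that $\mathcal L(\mathcal D_\infty\mid G,\theta,\mathbf v)=P_{\mathbf V}^{\otimes\mathbb N}$ depends on the SCM only through $P_{\mathbf V}$, and evaluates the limit by applying Proposition~\ref{prop:counterfactual_identifiability_formal} within each fixed-graph subclass. The appeal to de Finetti/Hewitt--Savage in your Step~2 is unnecessary, since the conditional independence follows directly from that conditional-law fact, as in the paper; your tightness argument via the indicators $\mathbf 1\{|Z|>m\}$ correctly supplies a detail the paper leaves implicit in its ``convergence-determining family.''
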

\begin{proof}
All conditional laws below are understood as regular conditional
distributions under the joint model induced by $\Pi$, and all
expectations are taken under this model.
Consider an infinite observational sequence, with
$\mathcal D_{\mathrm{obs}}$ containing its first $n$ samples,
and keep the same individual as $n$ increases.
Define
$$
    \mathcal F_n
    =
    \sigma\bigl(
        \mathbf v,\mathbf v^{(1)},\ldots,\mathbf v^{(n)}
    \bigr),
    \qquad
    \mathcal F_\infty
    =
    \sigma\left(\bigcup_{n\geq1}\mathcal F_n\right).
$$
Under the ANM assumption,
$Y^t=\mathcal Y(\mathbf v,t;G,\theta)$.
Conditioning on the SCM therefore gives
$$
\begin{aligned}
    \mathcal L_\Pi(Y^t\mid\mathcal F_n)
    &=
    \sum_{G\in\mathcal G_\Pi}
    p_\Pi(G\mid\mathbf v,\mathcal D_{\mathrm{obs}})
    \int_{\Theta_G}
    \delta_{\mathcal Y(\mathbf v,t;G,\theta)}\,
    p_\Pi(d\theta\mid G,\mathbf v,\mathcal D_{\mathrm{obs}})
    \\
    &=
    p^\star_\Pi(\,\cdot\mid\mathbf v,t,\mathcal D_{\mathrm{obs}}),
\end{aligned}
$$
which is Eq.~\ref{eq:post_finite}.
The infinite observational sequence determines
$P_{\mathbf V}$ almost surely.
Indeed, the strong law of large numbers, applied to a countable
convergence-determining family of bounded continuous functions,
yields
$$
    \frac{1}{n}\sum_{r=1}^{n}\delta_{\mathbf v^{(r)}}
    \;\Rightarrow\;P_{\mathbf V}
    \qquad\text{almost surely}.
$$
Thus, $P_{\mathbf V}$ has an
$\mathcal F_\infty$-measurable version.

Let $\mathcal D_\infty$ denote the infinite observational
sequence.
By conditional i.i.d.\ sampling,
$$
    \mathcal L_\Pi(\mathcal D_\infty\mid G,\theta,\mathbf v)
    =
    P_{\mathbf V}^{\otimes\mathbb N}.
$$
This conditional law depends on the SCM only through
$P_{\mathbf V}$.
Since $Y^t$ is determined by $(G,\theta,\mathbf v)$,
$\mathcal D_\infty$ and $Y^t$ are conditionally independent
given $(\mathbf v,P_{\mathbf V})$.
Together with the measurability established above, this gives
$$
    \mathcal L_\Pi(Y^t\mid\mathcal F_\infty)
    =
    \mathcal L_\Pi(Y^t\mid\mathbf v,P_{\mathbf V}).
$$
Moreover,
$\mathcal L_\Pi(\mathbf v\mid G,P_{\mathbf V})=P_{\mathbf V}$,
so
$p_\Pi(G\mid\mathbf v,P_{\mathbf V})
=p_\Pi(G\mid P_{\mathbf V})$.
We now apply the identifiability argument in Appendix~A.1
separately within each fixed-graph subclass.
Within such a subclass, condition (i) is automatic, while
conditions (ii) and (iii) hold by assumption.
Thus, each compatible graph with positive posterior probability,
together with $P_{\mathbf V}$, uniquely determines the
counterfactual outcome for $(\mathbf v,t)$.
Hence,
$$
\begin{aligned}
    \mathcal L_\Pi(Y^t\mid\mathcal F_\infty)
    &=
    \sum_{i=1}^{K}
    p_\Pi(G_i\mid P_{\mathbf V})\,
    \delta_{\mathcal Y
    (\mathbf v,t;G_i,\theta_{G_i}(P_{\mathbf V}))}
    \\
    &=
    p^\star_\Pi(\,\cdot\mid\mathbf v,t,P_{\mathbf V}),
\end{aligned}
$$
which is Eq.~\ref{eq:counterfactual_population_posterior}.

Finally, for any bounded continuous function
$h:\mathbb R\to\mathbb R$, L'evy's upward theorem gives
$$
\begin{aligned}
    &\int h(y)\,
    p^\star_\Pi(dy\mid\mathbf v,t,\mathcal D_{\mathrm{obs}})
    =
    \mathbb E_\Pi[h(Y^t)\mid\mathcal F_n]
    \\
    &\qquad\longrightarrow
    \mathbb E_\Pi[h(Y^t)\mid\mathcal F_\infty]
    =
    \int h(y)\,
    p^\star_\Pi(dy\mid\mathbf v,t,P_{\mathbf V})
    \qquad\text{almost surely}.
\end{aligned}
$$
Applying this result simultaneously to a countable
convergence-determining family of bounded continuous functions
on $\mathbb R$ establishes the claimed almost-sure weak
convergence.
\end{proof}

\subsection{Proof of Proposition~\ref{prop:exact_cate_adjustment}}
\label{app:proof_cepo_adj}

\begin{cateadjustmentrestated}[Exact counterfactual recovery by CATE adjustment]
Consider a fully observed, acyclic SCM with mutually independent
additive noises. Let $\mathbf{X}$ contain all endogenous variables
other than $T$ and $Y$, and assume that the CEPOs are finite
and continuous in the treatment value.

If $\mathbf{X}$ contains no descendants of $T$ and
$Y$ is not a parent of $T$, then, for almost every factual
observation $(\mathbf{x},t_{\mathrm f},y)$ and any intervention
value $t$ in the conditional support of
$T\mid\mathbf{X}=\mathbf{x}$, the CATE
$\mu(\mathbf{x},t)-\mu(\mathbf{x},t_{\mathrm f})$
is identifiable from the observational distribution, and
the counterfactual outcome under $\mathrm{do}(T=t)$ satisfies
\[
    y^t
    =
    y+\mu(\mathbf{x},t)
      -\mu(\mathbf{x},t_{\mathrm f}).
\]
Consequently, CATE adjustment recovers the individual
counterfactual exactly whenever the CEPO estimates are exact
at both the factual and intervention treatment values.
\end{cateadjustmentrestated}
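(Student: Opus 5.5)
Since $\mathbf X$ contains no descendant of $T$, the counterfactual state under $\operatorname{do}(T=t)$ keeps $\mathbf X$ at its factual value $\mathbf x$. The outcome $Y$ can therefore be written as a function of $(\mathbf X,T)$ plus a noise term that does not depend on the treatment. The plan is to establish this decomposition, use it to obtain $y^t-y=g(\mathbf x,t)-g(\mathbf x,t_{\mathrm f})$, and then identify the right-hand side with the CATE $\mu(\mathbf x,t)-\mu(\mathbf x,t_{\mathrm f})$.

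\textbf{Step 1 (unrolling).} I would unroll the structural equations of $Y$ and of its descendants that lie between $T$ and $Y$. Every variable other than $T,Y$ is in $\mathbf X$, and no element of $\mathbf X$ descends from $T$. Hence no intermediate node lies on a directed path $T\to\cdots\to Y$ except $T$ and $Y$ themselves.

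Concretely, $Y=f_Y(\mathbf V_{\operatorname{pa}(Y)})+U_Y$, where $\operatorname{pa}(Y)\subseteq \mathbf X\cup\{T\}$. So $Y=g(\mathbf X,T)+U_Y$ with $g$ obtained from $f_Y$ by ignoring unused arguments.

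In the counterfactual world the $\mathbf X$-values are unchanged. Each $X_i$ is computed from its non-descendant-of-$T$ parents and its own unchanged $u_i$, and by induction along a topological order these stay at their factual values. This gives
\[
y^t=g(\mathbf x,t)+u_Y,\qquad y=g(\mathbf x,t_{\mathrm f})+u_Y,
\]
and therefore $y^t-y=g(\mathbf x,t)-g(\mathbf x,t_{\mathrm f})$.

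\textbf{Step 2 (identification).} I would show that $\mu(\mathbf x,t):=\mathbb E[Y^t\mid\mathbf X=\mathbf x]$ equals $g(\mathbf x,t)+\mathbb E[U_Y\mid \mathbf X=\mathbf x]$, and that the same quantity equals the observational regression $\mathbb E[Y\mid \mathbf X=\mathbf x,T=t]$. The key input is $U_Y\perp(\mathbf X,T)$, which I expect to be the main obstacle.

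Since $Y$ is not a parent of $T$ and no $X_i$ descends from $T$, I would argue that $Y$ is a descendant of neither $T$ nor any $X_i$. For $T$ this requires that $Y$ is not an ancestor of $T$ either; the route is that any path $Y\to\cdots\to T$ would pass through some $X_i$, which is a descendant of $Y$ and a parent-side ancestor of $T$, not a descendant of $T$, so this route must be ruled out separately. I would therefore verify carefully that $(\mathbf X,T)$ are functions of $\mathbf U_{-Y}$ only, i.e.\ that $Y$ has no descendants. Every other node is in $\mathbf X$ (and $\mathbf X$ is not downstream of $Y$ in the intended reading) or is $T$ (and $Y$ is not a parent of $T$). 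If some $X_i$ were a child of $Y$, the hypotheses would need the extra reading that $Y$ is a sink; I would state this explicitly.

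Given $U_Y\perp(\mathbf X,T)$, both conditional expectations reduce to $g(\mathbf x,t)+\mathbb E[U_Y]$. The additive constant cancels in the difference, giving the CATE as $g(\mathbf x,t)-g(\mathbf x,t_{\mathrm f})$.

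\textbf{Step 3 (almost-everywhere to pointwise).} Conditional expectations are only defined almost surely, so I would use continuity of the CEPOs in the treatment value, together with the restriction of $t$ to the conditional support of $T\mid\mathbf X=\mathbf x$, to upgrade the almost-sure equality $\mu(\mathbf x,\cdot)=\mathbb E[Y\mid\mathbf x,\cdot]$ to a pointwise one. This mirrors the continuity argument in the proof of Proposition~\ref{prop:counterfactual_identifiability_formal}, and it holds for almost every $(\mathbf x,t_{\mathrm f},y)$, with $t_{\mathrm f}$ automatically in the support.

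Combining Steps 1–3 yields $y^t=y+\mu(\mathbf x,t)-\mu(\mathbf x,t_{\mathrm f})$. The final sentence of the statement is then immediate: plugging exact CEPO estimates at $t$ and $t_{\mathrm f}$ into this identity recovers $y^t$.
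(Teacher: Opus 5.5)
Your Steps~1 and~3 match the paper. Step~2 has a genuine gap. Routing the identification through $U_Y\perp(\mathbf X,T)$ requires $Y$ to have no descendants, and the hypotheses do not give this. In a graph such as $Y\to X_1\to T$, $Y$ is not a parent of $T$ and $X_1$ is not a descendant of $T$, yet $U_Y$ is dependent on both $X_1$ and $T$. So the reduction of both conditional expectations to $g(\mathbf x,t)+\mathbb E[U_Y]$ fails there. You notice this and propose to add the reading that $Y$ is a sink, but that is an extra assumption, so as written you prove a strictly weaker statement. Separately, the sentence ``$Y$ is a descendant of neither $T$ nor any $X_i$'' is backwards: in the basic case $T\to Y$, $Y$ is a descendant of $T$. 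You presumably mean that nothing descends from $Y$.

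The missing idea is to use the hypothesis on $T$'s parents rather than on $Y$'s descendants. Your own first formula, $\mu(\mathbf x,a)=g(\mathbf x,a)+\mathbb E[U_Y\mid\mathbf X=\mathbf x]$, already holds with no independence between $U_Y$ and $\mathbf X$, because $\mathbf X$ is unchanged under the intervention. The conditional noise mean then cancels in $\mu(\mathbf x,t)-\mu(\mathbf x,t_{\mathrm f})$, so you should not replace it by $\mathbb E[U_Y]$.

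For identification you only need $Y^a\perp T\mid\mathbf X$, which is what the paper uses, and this follows without any sink assumption:
\begin{itemize}
\item Since $Y$ is not a parent of $T$, you can write $T=h(\mathbf X)+U_T$.
\item Since nothing in $\mathbf X$ descends from $T$, both $\mathbf X$ and $Y^a=g(\mathbf X,a)+U_Y$ are functions of the noises other than $U_T$.
\item Hence $U_T\perp(\mathbf X,Y^a)$, and therefore $Y^a\perp T\mid\mathbf X$.
\end{itemize}
This gives $\mu(\mathbf x,a)=\mathbb E[Y\mid\mathbf X=\mathbf x,T=a]$ almost everywhere, and your Step~3 upgrades it pointwise.

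In the configuration you worried about, where $Y$ has a descendant in $\mathbf X$, $T$ cannot be an ancestor of $Y$. Then $g$ does not depend on $T$, $y^t=y$, and the CATE is zero. The argument above covers this case directly.
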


\begin{proof}
We first establish identification of the CATE.
Since $Y$ is not a parent of $T$, all parents of $T$ are
contained in $\mathbf{X}$, so its structural equation can
be written as
\[
    T=h(\mathbf{X})+U_T.
\]
Because $\mathbf{X}$ contains no descendants of $T$,
it depends only on exogenous noises other than $U_T$.
The same holds for $Y^a$ under $\mathrm{do}(T=a)$, which
replaces the structural equation for $T$.
Independence of the exogenous noises therefore implies
$U_T\perp(\mathbf{X},Y^a)$ and hence
$Y^a\perp T\mid\mathbf{X}$.
Together with consistency, this gives
$$
    \mu(\mathbf{x},a)
    =
    \mathbb{E}[Y\mid\mathbf{X}=\mathbf{x},T=a]
$$
for $P_{\mathbf{X},T}$-almost every $(\mathbf{x},a)$.
For $P_{\mathbf{X}}$-almost every $\mathbf{x}$, continuity
of $\mu(\mathbf{x},\cdot)$ uniquely determines its values
throughout the conditional support of
$T\mid\mathbf{X}=\mathbf{x}$.
Thus, the CEPOs at $t$ and $t_{\mathrm f}$, and their
difference, are identifiable from the observational distribution.

We next establish individual counterfactual recovery.
Since $\mathbf{X}$ contains all endogenous variables other
than $T$ and $Y$, we can write
$$
    Y=g(\mathbf{X},T)+U_Y,
$$
where $g$ depends only on the parents of $Y$.
Intervening on $T$ preserves both $\mathbf{X}$ and the
individual's exogenous noise, so
$$
    y=g(\mathbf{x},t_{\mathrm f})+u_Y,
    \qquad
    y^t=g(\mathbf{x},t)+u_Y.
$$
Taking conditional expectations in
$Y^a=g(\mathbf{X},a)+U_Y$ yields
$$
    \mu(\mathbf{x},a)
    =
    g(\mathbf{x},a)
    +\mathbb{E}[U_Y\mid\mathbf{X}=\mathbf{x}].
$$
The conditional noise mean cancels when comparing the two
treatment values:
$$
    \mu(\mathbf{x},t)-\mu(\mathbf{x},t_{\mathrm f})
    =
    g(\mathbf{x},t)-g(\mathbf{x},t_{\mathrm f})
    =
    y^t-y.
$$
Rearranging establishes
Eq.~\ref{eq:exact_residual_calibration}.
The final claim follows by substituting exact CEPO estimates
into Eq.~\ref{eq:factual_residual_calibration}.
\end{proof}
\subsection{Identifiability of 7 C-Suite settings} 

\label{app:iden_csuite}

We use the zero-based indices and executable generating equations of
C-Suite~\citep{geffner2022DECI}. Let $s(z)=\log(1+e^z)$,
$a=\sqrt{2/3}$, $b=1/\sqrt3$, and
$\alpha=\sqrt{1-6(1/\sqrt5-1/3)}$.
Within each setting, the primitive noises are mutually independent:
$Z_i\sim\mathcal N(0,1)$, $L_i\sim\operatorname{Laplace}(0,1)$,
and $W_i\sim t_3$. Transformed primitive noises are absorbed into
$U_i$. We consider scalar variables and the continuous-treatment versions.
The sets $S_i$ are population parent supports.

For the graph-identification arguments below, we additionally assume
faithfulness of the generating and competing models to their DAGs,
and restrict $\mathfrak M$ to independent ANMs with $C^3$ mechanisms.
Under faithfulness, competing graphs have the same skeleton and
unshielded colliders. We also use the graphical fact that two distinct
Markov-equivalent DAGs have an oppositely oriented edge $A\to B$,
$B\to A$, with
$\operatorname{pa}_{G}(B)\setminus\{A\}
=\operatorname{pa}_{G'}(A)\setminus\{B\}=S$;
conditioning on $S$ would therefore give additive-noise representations
in both directions~\citep[Proposition~29(ii)]{peters2014causal}.

For condition (ii), it suffices to establish finite first moments of
all observed variables: in any observationally equivalent independent
ANM, $X_i=\widetilde f_i(\widetilde{\mathbf{PA}}_i)+\widetilde U_i$
is an integrable sum of independent random variables, so
$\mathbb E|\widetilde U_i|<\infty$.
Indeed, conditioning at a parent value $p$ with finite conditional
absolute moment gives
$\mathbb E|\widetilde U_i|
\leq\mathbb E|\widetilde f_i(p)+\widetilde U_i|
+|\widetilde f_i(p)|<\infty$.
Continuity of every competing mechanism follows from the model-class
restriction above.

\paragraph{Collider lingauss.}
The equations and treatment are
\[
X_0=Z_0,\qquad X_2=Z_2,\qquad
X_1=(X_0+X_2+Z_1)/\sqrt3,\qquad T=X_0.
\]
\begin{proof}
\textit{(i)} Under the additional faithfulness assumption, the only
nontrivial independence, $X_0\perp X_2$, identifies
$X_0\to X_1\leftarrow X_2$ uniquely. Specifically,
$\rho_{01}=\rho_{12}=1/\sqrt3$,
$\rho_{02\mid1}=-1/2$, and
$\rho_{01\mid2}=\rho_{12\mid0}=1/\sqrt2$.
Hence its Markov equivalence class contains only this DAG.

\textit{(ii)} All variables are Gaussian with finite absolute moments.
Consequently every observationally equivalent ANM has integrable
errors, and its mechanisms are continuous by the definition of
$\mathfrak M$.

\textit{(iii)} The joint density is positive on $\mathbb R^3$, so
$S_1=\mathbb R^2$. Replacing $x_0$ by any finite $t$ preserves
parent-support membership; the other non-intervened node is a root.
Proposition~\ref{prop:counterfactual_identifiability} therefore applies.
\end{proof}

\paragraph{Fork lin nongauss.}
The equations and treatment are
\[
X_1=Z_1,\qquad
X_0=aX_1+b\{s(1.8Z_0)-1\},\qquad
X_2=aX_1+b\{s(1.8Z_2)-1\},\qquad T=X_0.
\]
\begin{proof}
\textit{(i)} Under faithfulness, the skeleton is
$X_0-X_1-X_2$ with no collider.
For either leaf $B\in\{X_0,X_2\}$, write $B=aX_1+N$,
where $N$ has positive variance and support $[-b,\infty)$.
The conditional support of $X_1\mid B=y$ is
$(-\infty,(y+b)/a]$.
A reverse ANM $X_1=g(B)+E$, $E\perp B$, would make these supports
translates of one fixed support, forcing $g(y)=y/a+c$.
Its residual would then equal $-N/a-c$, whose covariance with
$B$ is $-\operatorname{Var}(N)/a\ne0$.
Thus neither fork edge can reverse, identifying
$X_0\leftarrow X_1\to X_2$.

\textit{(ii)} Since $s(z)\leq\log2+|z|$, all variables have finite
second moments. The preliminary integrability argument gives finite
first moments of every competing error; all competing mechanisms
are continuous.

\textit{(iii)} The only nonempty parent supports are
$S_0=S_2=\mathbb R$. Intervening on the leaf $X_0$ changes no
parent of another node. Hence all non-intervened parent configurations
remain in support, and the proposition applies.
\end{proof}

\paragraph{Fork nonlin gauss.}
The equations and treatment are
\[
X_1=Z_1,\qquad
X_0=\sqrt6e^{-X_1^2}+\alpha Z_0,\qquad
X_2=s(1-X_1)-1.5+\sqrt{0.15}Z_2,\qquad T=X_0.
\]
\begin{proof}
\textit{(i)} Under faithfulness, it remains to orient the two edges
of the fork skeleton. Each forward bivariate model has a Gaussian
cause, independent nondegenerate Gaussian noise, and a smooth
nonlinear mechanism. The bivariate ANM identifiability result excludes
both reverse directions~\citep[Theorem~20 and Corollary~22]{peters2014causal}.
Thus the graph is $X_0\leftarrow X_1\to X_2$.

\textit{(ii)} The exponential term is bounded, softplus grows at most
linearly, and Gaussian noises have finite second moments.
Therefore all observed first moments and all competing error first
moments are finite. Continuity holds throughout $\mathfrak M$.

\textit{(iii)} The joint density is positive on $\mathbb R^3$.
In particular, $S_0=S_2=\mathbb R$, and intervening on the leaf
$X_0$ changes no parent of another node. The proposition applies.
\end{proof}

\paragraph{Large backdoor.}
The equations are
\[
X_0=s(1.8Z_0)-1,\qquad
X_1=1.5\{s(X_0+1)+s(0.5)-3\}+0.25Z_1,
\]
\[
X_2=s(X_0+1)+s(0.5+Z_2)-3,\qquad
X_i=s(X_{i-2}+1)+s(0.5+Z_i)-3\quad(i=3,4,5,6),
\]
\[
X_7=1.5s(X_5+1)-1+0.3Z_7,\qquad
X_8=2-s\!\left(1+\frac{-1.3X_6+X_7}{3}\right)+0.6L_8,
\]
with $T=X_7$.
\begin{proof}

\textit{(i)} Under faithfulness, the skeleton and the collider
$X_6\to X_8\leftarrow X_7$ are identified.
A reversal of either edge adjacent to $X_0$ would make that neighbor
its sole parent, since a second incoming edge would introduce an
unshielded collider at $X_0$.
For $X_0-X_1$, the support of $X_0\mid X_1=y$ is always
$[-1,\infty)$. A reverse ANM would therefore have a constant
mechanism, contradicting dependence.
For $X_0-X_2$, the conditional support is
\[
\operatorname{supp}(X_0\mid X_2=y)
=[-1,s^{-1}(y+3)-1],
\]
whose length varies with $y$; these intervals cannot be translates
of a fixed noise support. Thus $X_0\to X_1$ and $X_0\to X_2$.
Preserving the absence of further unshielded colliders then orients
both branches away from $X_0$, identifying the entire DAG.

\textit{(ii)} Every structural expression is a finite composition
of affine maps and softplus functions, with Gaussian or Laplace
primitive noise. The linear-growth bound for softplus gives finite
second moments by topological induction. Hence all required first
moments are finite in every competing ANM, and continuity holds.

\textit{(iii)} Under $\operatorname{do}(X_7=t)$, only the parents
of $X_8$ change among non-intervened nodes.
Given $\mathbf W=(X_0,\ldots,X_6)$, the variable $X_7$ is Gaussian
with variance $0.3^2$ and continuous conditional mean.
Consequently
$S_8=\operatorname{supp}(X_6)\times\mathbb R$, so
$(x_6,t)\in S_8$ for every finite $t$ and every factual observation
in support. All other required parent configurations are unchanged.
The proposition applies.
\end{proof}

\paragraph{Nonlin Simpson.}
The equations are
\[
X_0=Z_0,\qquad X_1=s(1-X_0)-1.5+0.15Z_1,
\]
\[
X_2=\tanh(2X_1)+1.5X_0-1+\tanh(Z_2),\qquad
X_3=5\tanh\!\left(\frac{X_2-4}{5}\right)+3+0.1L_3,
\]
with $T=X_1$ and edges $0\to1$, $0\to2$, $1\to2$, $2\to3$.
\begin{proof}

\textit{(i)} Under faithfulness, apply the conditional-edge reversal
criterion stated above. Each possible reversed edge is excluded:
$X_0\to X_1$ is identifiable by the nonlinear Gaussian bivariate
result~\citep[Theorem~20 and Corollary~22]{peters2014causal}.
For $X_0\to X_2$ conditional on $X_1$, the forward model is
$B=1.5A+c+N$ with $A$ of full support and $N=\tanh(Z_2)$.
Reverse conditional supports are bounded intervals of fixed length,
forcing a reverse mechanism $g(B)=B/1.5+c'$; the resulting residual
has covariance $-\operatorname{Var}(N)/1.5$ with $B$, a contradiction.

For $X_1\to X_2$ conditional on $X_0=x_0$, put
$r=x_2-(1.5x_0-1)$. The reverse conditional support is a lower
half-line when $-2<r<0$ and an upper half-line when $0<r<2$.
These supports cannot be translates of one noise support.
Finally, for $X_2\to X_3$, write
$h(x)=5\tanh((x-4)/5)+3$, whose range is $(-2,8)$.
The smooth positive density of $X_2$, combined with Laplace noise,
makes $p(X_2\mid X_3=y)$ have a cusp at $h^{-1}(y)$ for
$y\in(-2,8)$, whereas it is smooth for $y\notin[-2,8]$.
A location family cannot change between these two regularity types.
Thus no required reverse ANM exists, and the DAG is identified.

\textit{(ii)} Gaussian and Laplace noises have finite moments,
$\tanh$ is bounded, and softplus has at most linear growth.
Therefore every observed variable has finite first moment,
as does every competing additive error. All competing mechanisms
are continuous.

\textit{(iii)} Here $S_1=\mathbb R$, $S_2=\mathbb R^2$, and
$S_3=\mathbb R$: $(X_0,X_1)$ has an everywhere-positive density,
and $1.5X_0$ gives $X_2$ full marginal support.
Under $\operatorname{do}(X_1=t)$, both $(x_0,t)$ and the propagated
$x_2^t$ therefore lie in their required parent supports.
The proposition applies despite the bounded error at $X_2$.
\end{proof}

\paragraph{Symprod Simpson.}
The equations are
\[
X_0=Z_0,\qquad X_1=2\tanh(2X_0)+0.1W_1,
\]
\[
X_2=0.5X_0X_1+0.5L_2,\qquad
X_3=\tanh(1.5X_0)+0.3Z_3,
\]
with $T=X_1$ and edges $0\to1$, $0\to2$, $1\to2$, $0\to3$.
\begin{proof}

\textit{(i)} Under faithfulness, again exclude the conditional
reversals of the four edges.
The edge $X_0\to X_3$ is identifiable by the nonlinear Gaussian
bivariate result~\citep[Theorem~20 and Corollary~22]{peters2014causal}.
For $X_0\to X_1$, let $q$ be the density of $0.1W_1$.
Since $f(x)=2\tanh(2x)$ is bounded,
$q(y-f(x))/q(y)\to1$ uniformly in $x$ as $y\to\infty$.
Thus $X_0\mid X_1=y$ converges to $\mathcal N(0,1)$, including
its first two moments. If these conditional laws formed a location
family, every member would consequently be Gaussian with variance one.
But its density is proportional to
$e^{-x^2/2}q(y-f(x))$, whose second factor is bounded, positive,
and nonconstant in $x$; this cannot be a Gaussian density.
Hence the reverse ANM is impossible.

For either edge entering $X_2$, fix the other parent to any nonzero
value. The conditional model is $B=cA+N$, where $c\ne0$,
$N\sim\operatorname{Laplace}(0,0.5)$, and $A$ has a smooth
positive density and finite variance.
The density $p(A\mid B=y)$ has exactly one cusp, at $A=y/c$.
A reverse location family must therefore have $g(y)=y/c+c'$, so
its residual is $-N/c-c'$, with covariance
$-\operatorname{Var}(N)/c\ne0$ with $B$.
Thus neither conditional edge can reverse, and the DAG is identified.

\textit{(ii)} All primitive noises have finite second moments.
Moreover,
\[
X_0X_1=2X_0\tanh(2X_0)+0.1X_0W_1
\]
 has finite second moment by independence and boundedness of $\tanh$.
Therefore all observed first moments, and hence all competing error
first moments, are finite. Continuity holds throughout $\mathfrak M$.

\textit{(iii)} Every additive error has an everywhere-positive density
on $\mathbb R$, so the joint density is positive on $\mathbb R^4$.
All parent supports are full Euclidean spaces, and every finite
intervention satisfies condition (iii). The proposition applies.
\end{proof}

\paragraph{Weak arrows.}
The equations are
\[
X_0=s(1.8Z_0)-1,\qquad
X_1=0.75\{s(X_0+1)+s(0.5)-3\}+0.75Z_1,
\]
\[
X_2=s(1.5-X_0)+s(0.5+Z_2)-3,\qquad
X_3=s(1.5-X_1)+s(0.5+Z_3)-3,
\]
\[
X_i=s(X_{i-2}+1)+s(0.5+Z_i)-3\quad(i=4,5,6),\qquad
X_7=1.5s(X_5+1)-1+0.3Z_7,
\]
\[
X_8=s\!\left(1+0.1\sum_{i=0}^{5}X_i+0.5X_6+0.7X_7\right)-2+0.5L_8,
\]
with $T=X_7$.
\begin{proof}

\textit{(i)} Under faithfulness, every edge into $X_8$ is compelled:
for each $i<8$, some other parent $k<8$ is nonadjacent to $i$,
giving the unshielded collider $X_i\to X_8\leftarrow X_k$.
As in large backdoor, the fixed conditional support
$\operatorname{supp}(X_0\mid X_1=y)=[-1,\infty)$ excludes
$X_1\to X_0$.
For $X_0-X_2$, the reverse conditional support is
\[
[\ell(y),\infty),\qquad
\ell(y)=\max\{-1,\,1.5-s^{-1}(y+3)\}.
\]
For $y>s(2.5)-3$, this endpoint is constantly $-1$.
A reverse ANM would therefore require a constant mechanism on that
entire upper interval, and hence identical conditional laws there.
This is impossible: writing $h(x)=s(1.5-x)-3$ and $q$ for the
 density of $s(0.5+Z_2)$, the conditional density ratio at two distinct
$x,x'>-1$ is proportional to
$q(y-h(x))/q(y-h(x'))$.
Its log derivative tends to $h(x)-h(x')\ne0$ as $y\to\infty$,
since $(\log q)'(u)=-(u-0.5)+o(1)$.
Thus $X_0\to X_2$ also holds.
The absence of additional unshielded colliders then orients both
upstream branches away from $X_0$, identifying the full graph.

\textit{(ii)} The affine and softplus mechanisms have at most linear
growth. Topological induction gives finite second moments for every
variable. Hence all observed and competing-error first moments are
finite; all competing mechanisms are continuous.

\textit{(iii)} Let $\mathbf W=(X_0,\ldots,X_6)$.
The conditional distribution of $X_7$ given $\mathbf W$ is Gaussian
with positive variance and continuous mean, so
\[
S_8=\operatorname{supp}(\mathbf W)\times\mathbb R.
\]
Intervening on $X_7$ changes only the parent configuration of $X_8$
among non-intervened nodes, replacing $(\mathbf w,x_7)$ by
$(\mathbf w,t)\in S_8$. All other required parent configurations
are unchanged. The proposition applies.
\end{proof}

\section{Pretraining Details}
\subsection{Prior data generation} 
\label{app:prior_data_generation}

We sample the graph size $d$ by rounding a draw from
$\operatorname{LogUniform}(3,64)$ to the nearest integer.
We then sample the target average in-degree as
$\bar{k}\sim\operatorname{Gamma}(\alpha=3,\theta=0.4)$,
where $\alpha$ and $\theta$ denote the shape and scale, respectively. The target edge count is determined by $d\bar{k}$, and edges are sampled under a random node ordering as described in Section~\ref{sec:prior_generation}. 

For each root node, we sample its distribution family as
\begin{equation*}
F_{\mathrm{root}}\sim\operatorname{Categorical}
\bigl(\{\mathrm{Gaussian},\mathrm{Uniform},\mathrm{Zipf}\};
\{0.5,0.3,0.2\}\bigr),
\end{equation*}
where $\operatorname{Categorical}(S;\mathbf p)$ denotes a distribution over the choices in $S$ with corresponding probabilities $\mathbf p$. For each non-root node, the structural function is a randomly initialized MLP with one input per parent and a scalar output. Its number of linear layers $L$ and hidden width $H$ are sampled as
\begin{gather*}
L \sim \operatorname{Categorical}
\bigl(\{1,2,3\};\,\{0.3,0.4,0.3\}\bigr),\\
H \sim \operatorname{Categorical}
\bigl(\{8,12,16,20,32,64\};\,\{0.2,0.2,0.2,0.2,0.1,0.1\}\bigr).
\end{gather*}
All hidden layers share width $H$, and $L=1$ gives an affine mechanism. Activations are sampled independently for each hidden layer from tanh, sine, negation, identity, LeakyReLU, SELU, SiLU, ReLU, Softplus, and Hardtanh.
For each MLP, we sample a weight scale $s$, a bias standard deviation $\tau$, and a weight-masking probability $\rho$:
\begin{gather*}
s \sim \operatorname{LogUniform}(0.7,2.0),
\qquad
\tau \sim \operatorname{Uniform}(0,0.5),\\
\rho \sim 0.7\,\delta_0
      + 0.3\,\operatorname{Uniform}(0.05,0.5).
\end{gather*}
For a layer with $n_{\mathrm{in}}$ inputs, weights and biases are initialized independently as
\begin{gather*}
W_{ab} = A_{ab}Z_{ab},
\qquad
A_{ab} \sim \operatorname{Bernoulli}(q),\\
Z_{ab} \sim \mathcal N\!\left(0,\frac{c^2}{n_{\mathrm{in}}q}\right),
\qquad
b_a \sim \mathcal N(0,\tau^2).
\end{gather*}
We use $c=s$ for hidden layers and $c=1$ for the output layer. Weights in intermediate linear layers are independently masked with probability $\rho$ ($q=1-\rho$), while weights in the first and output linear layers are not masked ($q=1$).

For each non-root node, we sample the noise family as
\begin{equation*}
F_{\mathrm{noise}}\sim\operatorname{Categorical}
\bigl({\mathrm{Gauss},\mathrm{Lap},\mathrm{Gum}, \mathrm{Uni},
\mathrm{Exp},\mathrm{None}};
{0.5,0.1,0.1,0.1,0.1,0.1}\bigr). 
\end{equation*}
 The SNR $r$ is sampled independently for each mechanism through
\begin{equation*}
z\sim\operatorname{Uniform}(\log 0.5,\log 20),
\qquad
r=20.5-\exp(z).
\end{equation*}
This reflected log-uniform distribution spans $[0.5,20]$ and favors larger SNRs.
Each SCM generates 2000 factual--counterfactual pairs.
Before selecting the intervention value, we sample
$\eta \sim \mathrm{Uniform}(0.4,0.6)$ and randomly assign a fraction
$\eta$ of the individuals to the observational group and the remainder
to the target group.
The factual observations of the observational group form
$\mathcal D_{\mathrm{obs}}$.
We sample one intervention value uniformly from the factual treatment
values in this group and apply it to all individuals, sharing each
individual's exogenous noise between factual and counterfactual
evaluations.
\subsection{Training hyperparameters}   
\label{app:hyperparameters}
We train CAFE for $300$ epochs on 8 NVIDIA RTX 5090 GPUs, with $1000$ training steps per epoch. We use the Adam optimizer with a learning rate of $2\times10^{-4}$ and weight decay of $10^{-3}$. The first $100$ epochs are used for learning-rate warmup, and gradients are accumulated over eight steps before each optimizer update.

\section{Related Works}
\subsection{Counterfactual Prediction Baselines} 
\label{app:baselines} 
CAREFL and CausalNF perform counterfactual inference given a causal ordering or DAG, and both papers describe extensions for partially known causal graphs. For a fair comparison, we use the same PC procedure to estimate a partially directed graph from observational data before applying each method's extension. For both methods, we follow the model architectures and hyperparameter settings provided in their respective official repositories.\footnote{\url{https://github.com/piomonti/carefl}}\footnote{\url{https://github.com/psanch21/causal-flows}}
\vspace{-2pt}
\paragraph{CAREFL.}
Following the multivariate discovery procedure of CAREFL~\citep{khemakhem2021caurefl}, we use its likelihood-ratio criterion to orient the edges left undirected by PC. We then fit an affine autoregressive flow according to the resulting causal ordering and use it for counterfactual prediction.
\vspace{-2pt}
\paragraph{CausalNF.}
Following the partial-graph extension of CausalNF~\citep{javaloy2023causalNF}, we represent unresolved edges in both directions and group the resulting strongly connected components into blocks. Each block's joint distribution is modeled using an autoregressive flow with an arbitrary fixed ordering. The flow respects the causal ordering between blocks, with inter-block connections expanded to block-level dependencies.
\vspace{-2pt}
\paragraph{VCI.}
We follow the network architecture and training hyperparameters used in the original VCI~\citep{wu2025counterfactual} single-cell experiments, as in the official implementation.\footnote{\url{https://github.com/yulun-rayn/variational-causal-inference}} We use the designated treatment and outcome as $T$ and $Y$, respectively, and the remaining factual measurements as covariates $X$.  \vspace{-2pt}
\paragraph{SCIGAN.}
SCIGAN~\citep{bica2020estimating} is designed to estimate conditional mean treatment responses, 
its inference model does not condition on the individual's factual treatment or outcome. For individual counterfactual prediction, we omit this model and directly use the generator, conditioning on the individual's observations, including factual treatment and outcome, to predict under the queried intervention. We use the generator and dosage discriminator architectures and training hyperparameters provided in the official SCIGAN repository.\footnote{\url{https://github.com/ioanabica/SCIGAN}}
 
We include two Bayesian baselines, DECI and DiBS. DECI captures graph uncertainty through a variational posterior over causal graphs. We use the JointDiBS variant of DiBS, which approximates a joint posterior over graphs and mechanism parameters.   
\vspace{-2pt}
\paragraph{DECI.}
We implement DECI~\citep{geffner2022DECI} using the official Causica library.\footnote{\url{https://github.com/microsoft/causica}}
We jointly learn a variational distribution over causal graphs and nonlinear additive-noise mechanisms from observational data, using spline flows to model the noise distributions.
Following the official evaluation procedure, we sample 10 graphs from the learned variational posterior.
For each graph, we infer the individual's exogenous noise from their factual observations and reuse it to predict the outcome under the intervention.
We average the resulting counterfactual predictions across the sampled graphs to obtain the final point prediction. 
\vspace{-2pt}
\paragraph{DiBS.}
We implement DiBS~\citep{lorch2021dibs} using the official library.\footnote{\url{https://github.com/larslorch/dibs}}
We use JointDiBS to approximate the joint posterior over causal graphs and mechanism parameters from observational data. We model the causal mechanisms using neural networks with additive Gaussian noise. We retain the official defaults for the network architecture, noise variance, and optimizer settings. We standardize variables using the training data and run Stein variational gradient descent with 20 particles for 1000 steps.
\vspace{-2pt}
\paragraph{FiP.}
We implement FiP~\citep{scetbon2024fixed} using the official repository.\footnote{\url{https://github.com/microsoft/causica/tree/main/research_experiments/fip}} We first train a causal ordering model using the provided pretraining code. For each evaluation dataset, this pretrained model predicts a causal ordering from the observational training data. Given the predicted ordering, we fit FiP's Transformer-based SCM to the same training data, following the architecture and training hyperparameters in the repository. 
\vspace{-2pt}
\subsection{Relation to other causal foundation models}  
Pearl's ladder of causation distinguishes three levels: association ($\mathcal L_1$), intervention ($\mathcal L_2$), and counterfactual reasoning ($\mathcal L_3$)~\citep{pearl2018book}. These levels form a directional hierarchy: answering questions at a lower level does not generally suffice to answer questions at a higher one~\citep{bareinboim2022pearl}. 

Conventional tabular foundation models primarily address ($\mathcal L_1$), predicting outcomes from observed features. Most existing causal foundation models address ($\mathcal L_2$), estimating outcome distributions and treatment effects under interventions. CausalPFN~\citep{balazadeh2026causalpfn} learns the posterior distribution of conditional expected potential outcomes (CEPOs), $\mu_t(\mathbf{x})=\mathbb{E}[Y(t)\mid\mathbf{X}=\mathbf{x}]$, implemented for binary treatments under strong ignorability. CCPFN~\citep{stith2026causal} extends this target to continuous treatments, estimating treatment-response curves $t\mapsto\mu_t(\mathbf{x})$. CausalFM~\citep{ma2026foundation} learns predictive distributions of interventional outcomes or outcome contrasts with binary-treatment implementations for effect estimation under back-door, front-door, and instrumental-variable assumptions. Do-PFN~\citep{robertson2026pfn} targets conditional interventional distributions $p(y\mid\mathrm{do}(t),\mathbf{x})$ for binary treatments. MACE-TNP~\citep{dhir2026estimating} targets marginal interventional distributions $p(y\mid\mathrm{do}(t))$, averaging over uncertain graphs and mechanisms. \cite{reuter2026use} additionally condition interventional predictions on complete or partial graph and ancestral information. Even accurate predictions of these interventional quantities do not generally determine what would have happened to a particular individual given their factual observations.

CAFE addresses this individual counterfactual prediction task at $\mathcal L_3$. The work most closely related to CAFE is  TabPFN-CFM~\citep{zhu2026causalfoundationmodelstructure}, which also claims to support counterfactual prediction. However, TabPFN-CFM requires the counterfactual values of all other input variables to predict an individual's counterfactual outcome (as stated in Section 3.2 of the paper). For treatment descendants, these values are generally unavailable and reveal intervention-induced changes that would otherwise require inference, simplifying outcome prediction. CAFE instead uses only observational data, the individual's factual state, and the specified intervention. Furthermore, TabPFN-CFM does not establish conditions for individual counterfactual identifiability. Our analysis provides such conditions and characterizes the uncertainty arising when observationally equivalent graphs imply different counterfactual outcomes. We demonstrate that CAFE can capture this uncertainty by comparing its predicted counterfactual distributions with analytic reference posteriors. 
\vspace{-2pt}
\section{Additional Results}  
\subsection{Predictive Distributions under Identifiable Causal Structure}
\label{app:identifiable_distributions}
\begin{figure}[t]
    \centering
    \includegraphics[width=0.98\linewidth]{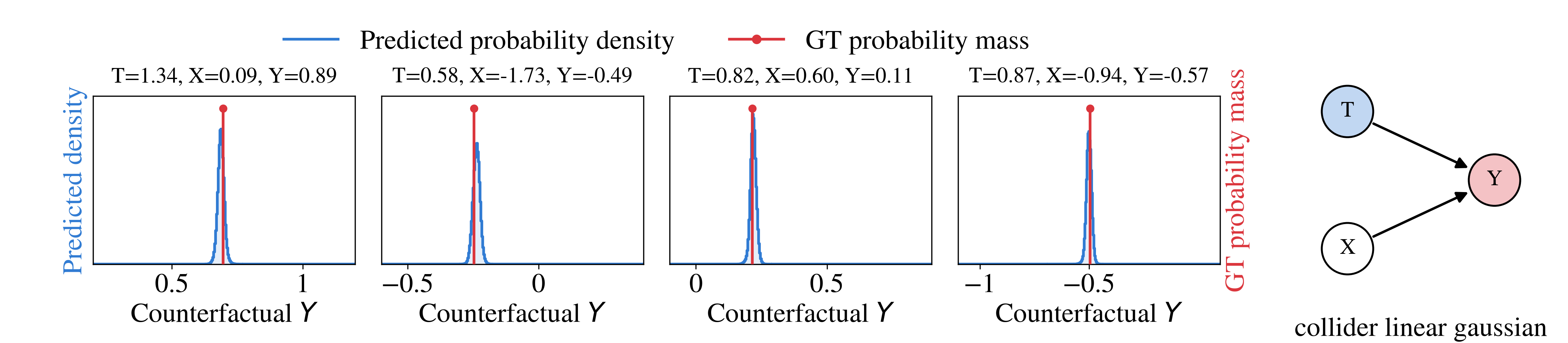}
    \par\vspace{0.0cm} 
    \includegraphics[width=0.98\linewidth]{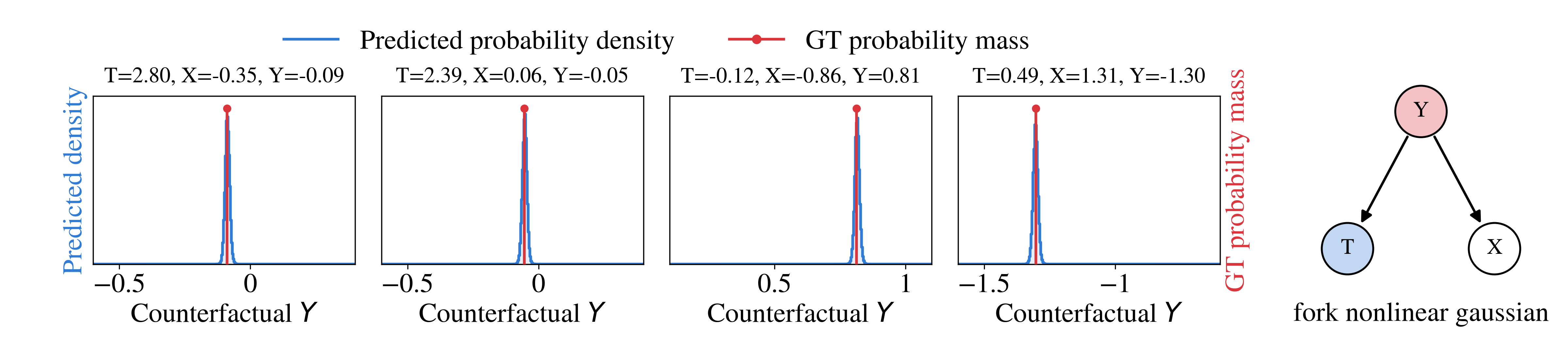}
    \caption{Counterfactual predictive distributions under identifiable causal structure. Predicted densities (blue) and ground-truth point masses (red) for four individuals from each of two C-Suite datasets: collider linear Gaussian (top) and fork nonlinear Gaussian (bottom). Panel titles report factual observations, and the corresponding causal graphs appear on the right. Each red marker represents unit probability mass at the individual's true counterfactual outcome; its height is read on a separate scale from the predicted density.}
    \label{fig:identifiable_distributions}
\end{figure} 
Figure~\ref{fig:identifiable_distributions} complements the point-prediction results in Section~\ref{sec:exp_csuite} by examining individual predictive distributions in two identifiable C-Suite settings. Under the conditions of Proposition~\ref{prop:counterfactual_identifiability}, the exact observational distribution, factual observations, and intervention uniquely determine each individual's counterfactual outcome, yielding a point-mass counterfactual distribution. Across the displayed individuals, our model produces sharply concentrated densities with peaks close to these ground-truth locations. In the collider setting, the predictions capture intervention-induced changes from the factual outcome. In the fork setting, where $Y$ is an ancestor of $T$, intervening on $T$ leaves $Y$ unchanged, and the predicted densities concentrate near the factual outcome. The small remaining spread and offsets are compatible with finite-context uncertainty and approximation in the learned, discretized predictive distribution. Together with the non-identifiable experiment, these examples provide qualitative evidence that the model can produce concentrated predictions when the counterfactual is identifiable while retaining uncertainty when observationally indistinguishable SCMs imply different outcomes.
\vspace{-2pt}
\subsection{Non additive noise settings in C-Suite}
\label{app:non_add_csuite}
\vspace{-2pt}
We further evaluate on three C-Suite datasets with non-additive discrete mechanisms: Large Backdoor Binary Treatment, Mixed Confounding, and Mixed Simpson. These settings contain binary or categorical nodes whose parents affect the probabilities of fixed outcome categories. Their structural equations therefore fall outside Assumption~\ref{ass:additive_noise}. Using the same evaluation protocol as in Section~\ref{sec:exp_csuite}, we compare counterfactual RMSE across all methods. Our method achieves the lowest RMSE on all three datasets and consistently improves upon the factual-outcome baseline, demonstrating robustness to these departures from the additive-noise assumption.  
\begin{table}[htbp]
    \centering
    \small
    \setlength{\tabcolsep}{5pt}
    \caption{Counterfactual prediction RMSE on C-Suite datasets
    with non-additive noise
    (mean $\pm$ standard deviation).
    Bold indicates the lowest mean RMSE in each column.}
    \label{tab:csuite_nonadditive}
    \begin{tabular}{lccc}
        \toprule
        Method
        & \makecell{large backdoor \\ binary treatment}
        & \makecell{mixed \\ confounding}
        & \makecell{mixed \\ simpson} \\
        \midrule
        y factual
        & \meanstd{0.17}{0.00}
        & \meanstd{0.86}{0.00}
        & \meanstd{0.50}{0.00} \\
        \midrule
        CAREFL
        & \meanstd{0.17}{0.00}
        & \meanstd{0.86}{0.00}
        & \meanstd{0.50}{0.00} \\
        CausalNF
        & \meanstd{0.23}{0.11}
        & \meanstd{0.86}{0.00}
        & \meanstd{0.64}{0.19} \\
        VCI
        & \meanstd{0.14}{0.01}
        & \meanstd{1.25}{0.09}
        & \meanstd{0.93}{0.34} \\
        SCIGAN
        & \meanstd{1.16}{0.04}
        & \meanstd{0.71}{0.07}
        & \meanstd{0.43}{0.06} \\
        DECI
        & \meanstd{0.14}{0.03}
        & \meanstd{0.73}{0.23}
        & \meanstd{1.13}{0.32} \\ 
        DiBS 
        & \meanstd{0.15}{0.01}
        & \meanstd{0.74}{0.08} 
        & \meanstd{0.56}{0.07} \\
        FiP
        & \meanstd{0.65}{0.08}
        & \meanstd{0.56}{0.00}
        & \meanstd{1.24}{0.00} \\
        CAFE
        & \bestmeanstd{0.05}{0.00}
        & \bestmeanstd{0.41}{0.00}
        & \bestmeanstd{0.38}{0.00} \\
        \bottomrule
    \end{tabular}
\end{table} 

\subsection{Comparison with Bayesian Structure-Learning Methods}
\label{app:bayesian_posterior_comparison}
We compare CAFE with DECI~\citep{geffner2022DECI} and
DiBS~\citep{lorch2021dibs} on the three-variable, six-DAG linear
Gaussian system in Section~4.2. DECI learns a variational posterior
over causal graphs, while DiBS jointly approximates a posterior
over graphs and mechanism parameters. This comparison examines
whether these methods accurately preserve counterfactual
uncertainty when the causal structure is not identifiable.

Table~\ref{app:posterior_prediction} shows that CAFE achieves the
lowest average error on all three metrics. DiBS improves on DECI
in mean RMSE and IQD, but has a larger variance RMSE, indicating
that more accurate posterior means do not necessarily imply more
accurate uncertainty estimates. The individual counterfactual predictions illustrate how the
baseline posteriors depart from the ground truth. DiBS assigns excessive
probability to the dominant regions of the ground truth posterior,
leaving too little mass on alternative counterfactual outcomes (Figure~\ref{app:posterior_comparison_deci_dibs}, top).
Thus, despite representing uncertainty over both graphs and
parameters, it does not faithfully preserve the relative
probabilities of plausible outcomes in these examples.
For DECI, the learned graph posterior concentrates on sparse
structures (Figure~\ref{app:posterior_comparison_deci_dibs}, bottom).
Its two most probable DAGs, $T\to X\to Y$ and $Y\to T\to X$,
receive $48.42\%$ and $45.02\%$ of the probability, respectively. These two-edge graphs receive $93.44\%$ of the probability, although neither
belongs to the six compatible complete DAGs. Most of the learned
graph probability therefore falls on structures that cannot
reproduce the population observational distribution. This
preference for sparse graphs is consistent with DECI's
sparsity-promoting graph prior, which may contribute to its
inaccurate counterfactual predictions in this fully connected
system. These results show that, under the evaluated configurations,
maintaining a distribution over causal graphs is not sufficient
to recover the ground truth counterfactual posterior. CAFE provides
more accurate posterior predictions and better preserves the
uncertainty arising from structural non-identifiability.
\begin{table}[htbp]
\centering
\small
\setlength{\tabcolsep}{4pt}
\caption{Counterfactual posterior prediction under non-identifiable
causal structure (mean $\pm$ standard deviation) at $N=1000$.}
\label{app:posterior_prediction}
\begin{tabular}{rccc}
\toprule
Method & Mean RMSE & Variance RMSE & IQD \\
\midrule
DECI & \meanstd{0.235}{0.045} & \meanstd{0.101}{0.030} & \meanstd{0.082}{0.050} \\
DiBS & \meanstd{0.142}{0.050} & \meanstd{0.175}{0.125} & \meanstd{0.034}{0.024} \\
CAFE & \bestmeanstd{0.094}{0.012} & \bestmeanstd{0.055}{0.010} & \bestmeanstd{0.018}{0.004} \\
\bottomrule
\end{tabular}
\end{table} 

\begin{figure}[t]
    \centering
    \includegraphics[width=0.95\linewidth]{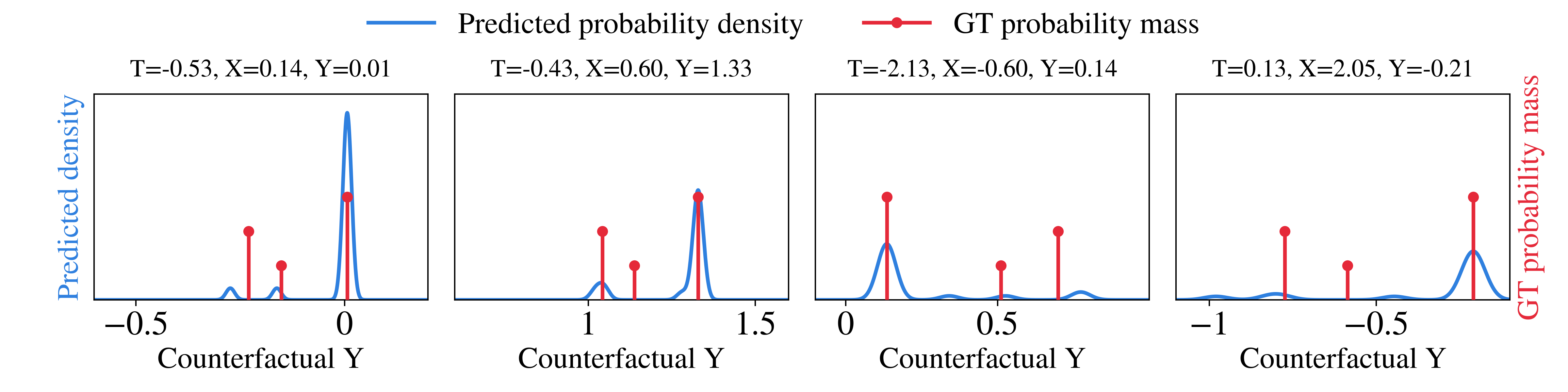}
    \par\vspace{0.0cm} 
    \includegraphics[width=0.95\linewidth]{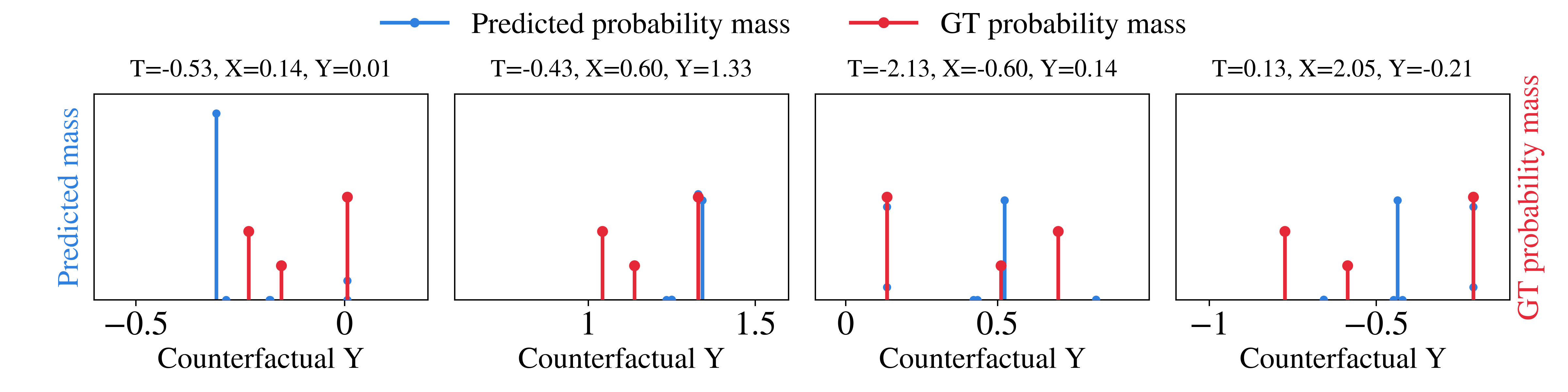}
    \caption{Estimated and ground-truth counterfactual posteriors in
the three-variable, six-DAG linear Gaussian system ($N=1000$).
Top: Counterfactual posterior estimates from DiBS.
Bottom: Counterfactual posterior estimates from DECI.}
    \label{app:posterior_comparison_deci_dibs}
\end{figure} 
\subsection{Running Time}
Table~\ref{tab:runtime_comparison} reports the time needed to apply each method to a new dataset, including graph estimation, model fitting, and counterfactual prediction where required. We exclude pretraining for CAFE and FiP because the pretrained models can be reused across downstream tasks.  

CAFE achieves the lowest runtime across all three datasets, as it requires neither graph search nor parameter updates on the evaluation data. We find that DiBS is particularly time-consuming. DiBS approximates the joint posterior over causal graphs and mechanism parameters by iteratively updating a set of particles. Each update requires Monte Carlo sampling of graphs and repeated likelihood and gradient computations. The resulting computation becomes more expensive as the number of variables increases, contributing to the long runtime observed on CausalMan Small.

\begin{table}[htbp]
    \centering
    \small
    \setlength{\tabcolsep}{4pt}
    \caption{Runtime comparison on Sangiovese and CausalMan
    (seconds, mean $\pm$ standard deviation; lower is better).
    The number of observed variables is shown for each dataset.}
    \label{tab:runtime_comparison}
    \begin{tabular}{lccc}
        \toprule
        Method
        & Sangiovese
        & CausalMan Micro
        & CausalMan Small \\
        & $d=15$
        & $d=24$
        & $d=53$ \\
        \midrule
        CAREFL
        & \meanstd{143.27}{15.45}
        & \meanstd{981.12}{62.54}
        & \meanstd{1064.83}{526.84} \\
        CausalNF  
        & \meanstd{21.13}{0.42} 
        & \meanstd{1453.74}{141.83} 
        & \meanstd{628.77}{57.50}\\
        VCI
        & \meanstd{276.73}{11.12}
        & \meanstd{211.85}{1.12}
        & \meanstd{208.36}{1.13} \\
        SCIGAN
        & \meanstd{233.44}{9.16}
        & \meanstd{117.91}{2.24}
        & \meanstd{122.23}{1.94} \\
        DECI
        & \meanstd{78.33}{1.04}
        & \meanstd{1065.45}{9.23}
        & \meanstd{214.36}{3.72} \\ 
        DiBS 
        & \meanstd{6638.62}{165.85}
        & \meanstd{9419.14}{983.09}
        & \meanstd{48251}{1354.46}\\
        FiP      
        & \meanstd{270.44}{5.36} 
        & \meanstd{84.71}{0.63} 
        & \meanstd{4405.92}{119.63} \\
        CAFE
        & \bestmeanstd{0.203}{0.003}
        & \bestmeanstd{0.068}{0.001}
        & \bestmeanstd{0.122}{0.002} \\
        \bottomrule
    \end{tabular}
\end{table}

\subsection{Can CEPO Methods Predict Individual Counterfactuals?}  
\label{app:cepo} 
We investigate whether methods used to estimate conditional expected
potential outcomes (CEPOs) at the interventional level ($\mathcal{L}_2$)
can be adapted to predict individual counterfactual outcomes ($\mathcal{L}_3$).  For an individual with factual observations
$(\mathbf{x},t_{\mathrm f},y)$, let $t$ denote the treatment value
specified by the intervention $\mathrm{do}(T=t)$.
For pretreatment covariates $\mathbf{X}$, the CEPO is
\begin{equation}
    \mu(\mathbf{x},t)
    =
    \mathbb{E}[Y^t\mid\mathbf{X}=\mathbf{x}].
    \label{eq:cepo_definition}
\end{equation}
A natural adaptation adds the estimated CATE between the intervention
and factual treatment values to the observed outcome~\citep{cate_adj1,cate_adj2}:
\begin{equation}
    \hat y^t_{\mathrm{cal}}
    =
    y+\hat\mu(\mathbf{x},t)
      -\hat\mu(\mathbf{x},t_{\mathrm f}).
    \label{eq:factual_residual_calibration}
\end{equation}
We refer to this procedure as \emph{CATE adjustment} and examine
the conditions under which it recovers the true individual
counterfactual.

\begin{proposition}[Exact counterfactual recovery by CATE adjustment]
\label{prop:exact_cate_adjustment}
Consider a fully observed, acyclic SCM with mutually independent
additive noises. Let $\mathbf{X}$ contain all endogenous variables
other than $T$ and $Y$, and assume that the CEPOs are finite
and continuous in the treatment value.

If $\mathbf{X}$ contains no descendants of $T$ and
$Y$ is not a parent of $T$, then, for almost every factual
observation $(\mathbf{x},t_{\mathrm f},y)$ and any intervention
value $t$ in the conditional support of
$T\mid\mathbf{X}=\mathbf{x}$, the CATE
$\mu(\mathbf{x},t)-\mu(\mathbf{x},t_{\mathrm f})$
is identifiable from the observational distribution, and
the counterfactual outcome under $\mathrm{do}(T=t)$ satisfies
\begin{equation}
    y^t
    =
    y+\mu(\mathbf{x},t)
      -\mu(\mathbf{x},t_{\mathrm f}).
    \label{eq:exact_residual_calibration}
\end{equation}
Consequently, CATE adjustment recovers the individual
counterfactual exactly whenever the CEPO estimates are exact
at both the factual and intervention treatment values.
\end{proposition}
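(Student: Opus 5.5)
The proof splits into two parts: identification of the CATE from $P_{\mathbf V}$, and the individual-level identity \eqref{eq:exact_residual_calibration}. Both rest on the additive structure of Assumption~\ref{ass:additive_noise} and on mutual independence of the exogenous noises.

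For identification, I will first use that $Y$ is not a parent of $T$ and that every other endogenous variable lies in $\mathbf X$. Hence $\operatorname{pa}(T)\subseteq\mathbf X$, and the structural equation for $T$ reads $T=h(\mathbf X)+U_T$. Because $\mathbf X$ contains no descendants of $T$, each component of $\mathbf X$ is a function of exogenous noises other than $U_T$. The potential outcome $Y^a$ under $\operatorname{do}(T=a)$ is likewise a function of $U_Y$ and the noises generating $\mathbf X$, since $\operatorname{do}(T=a)$ deletes $U_T$ from the system. Mutual independence then gives $U_T\perp(\mathbf X,Y^a)$, and therefore $Y^a\perp T\mid\mathbf X$ (conditional ignorability). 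Combined with consistency ($Y=Y^a$ when $T=a$), this yields $\mu(\mathbf x,a)=\mathbb E[Y\mid\mathbf X=\mathbf x,T=a]$ for $P_{\mathbf X,T}$-almost every $(\mathbf x,a)$. To extend this almost-sure equality to every $t$ in the conditional support of $T\mid\mathbf X=\mathbf x$, I will use the assumed continuity of $\mu(\mathbf x,\cdot)$. The argument mirrors Proposition~\ref{prop:counterfactual_identifiability_formal}: two continuous functions that agree almost surely agree on the whole support. I expect this step to be the main technical obstacle. Passing from joint almost-sure equality to equality on conditional supports requires disintegrating $P_{\mathbf X,T}$ and discarding a $P_{\mathbf X}$-null set of $\mathbf x$. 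This is exactly why the statement is qualified by ``almost every factual observation.''

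For the individual identity, I will write $Y=g(\mathbf X,T)+U_Y$, where $g$ depends only on the parents of $Y$; all of them lie in $\mathbf X\cup\{T\}$. Since $\mathbf X$ contains no descendants of $T$, intervening on $T$ leaves each component of $\mathbf x$ unchanged, and the individual's $u_Y$ is held fixed by definition of the counterfactual. This gives $y=g(\mathbf x,t_{\mathrm f})+u_Y$ and $y^t=g(\mathbf x,t)+u_Y$, so $y^t-y=g(\mathbf x,t)-g(\mathbf x,t_{\mathrm f})$.

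It remains to match this difference to the CATE. Taking conditional expectations in $Y^a=g(\mathbf X,a)+U_Y$ gives $\mu(\mathbf x,a)=g(\mathbf x,a)+\mathbb E[U_Y\mid\mathbf X=\mathbf x]$. The noise term does not depend on $a$, so it cancels in $\mu(\mathbf x,t)-\mu(\mathbf x,t_{\mathrm f})$, and \eqref{eq:exact_residual_calibration} follows. This holds even though $U_Y$ need not be independent of $\mathbf X$ when $Y$ has descendants in $\mathbf X$; only the $a$-invariance of the conditional noise mean is needed. Finally, substituting exact CEPO estimates into \eqref{eq:factual_residual_calibration} shows that $\hat y^t_{\mathrm{cal}}=y^t$, which is the last claim of the proposition.
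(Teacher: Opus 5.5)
Your proposal is correct and follows the paper's proof essentially step for step. Identification goes through $U_T\perp(\mathbf X,Y^a)$, consistency, and continuity of $\mu(\mathbf x,\cdot)$ on the conditional support; the individual identity comes from $y^t-y=g(\mathbf x,t)-g(\mathbf x,t_{\mathrm f})$, with the $a$-invariant term $\mathbb E[U_Y\mid\mathbf X=\mathbf x]$ cancelling in the CATE, and your remarks on the $P_{\mathbf X}$-null set and on $U_Y$ possibly depending on $\mathbf X$ are consistent with the paper's argument, just more explicit.
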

The ability of CATE adjustment to recover individual counterfactuals
depends on the causal structure.
Proposition~\ref{prop:exact_cate_adjustment} establishes that an
accurate CATE estimate guarantees exact recovery when
(i) none of the covariates is a descendant of treatment; and
(ii) the outcome is not a parent of treatment. We provide the proof of Proposition~\ref{prop:exact_cate_adjustment} in Appendix~\ref{app:proof_cepo_adj}. 
\begin{figure}[t]
    \centering
    \includegraphics[width=0.98\linewidth]{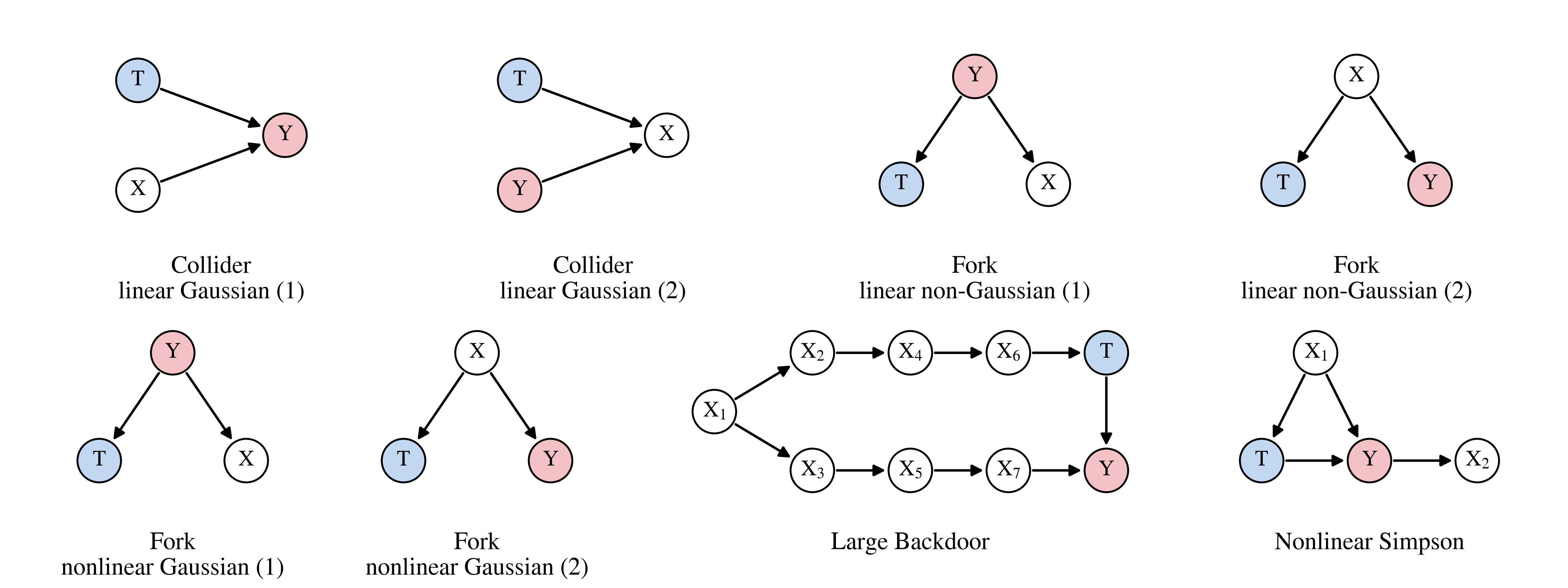}
    \caption{Eight causal graphs used to evaluate CATE adjustment.}
    \label{fig:cepo_dags}
\end{figure} 
\paragraph{Settings}In Section~\ref{sec:exp_csuite}, we pool prediction errors across targets within each graph when computing RMSE. Here, we report RMSE
separately for each target because different choices of outcome within the same graph can satisfy or violate the conditions for
exact CATE adjustment. This yields eight treatment--outcome settings across five C-Suite datasets, as illustrated in Figure~\ref{fig:cepo_dags}. With all remaining observed variables used as covariates,
four settings satisfy the conditions in
Proposition~\ref{prop:exact_cate_adjustment}:
Collider linear Gaussian (1), both Fork datasets with outcome (2),
and Large Backdoor.
The other four settings violate these conditions.
In Collider linear Gaussian (2), the covariate is a common child
of treatment and outcome, so adjustment conditions on a collider
that is also a treatment descendant.
In both Fork datasets with outcome (1), the outcome directly causes treatment.
In Nonlinear Simpson, the covariates include a descendant of the outcome and, consequently, of treatment. Thus, in these four settings, CEPO methods with CATE adjustment are not guaranteed to recover individual counterfactual outcomes exactly.

\paragraph{Baselines} We compare CausalForestDML~\citep{chernozhukov2018double}, DRNet~\citep{DRNet}, CCPFN~\citep{stith2026causal}, and S-learners based on
LimiX~\citep{zhang2025limix} and TabPFN v3~\citep{tabpfn3}.
CausalForestDML directly estimates CATE. We obtain counterfactual predictions by
adding the estimated CATE to the factual outcome. Other baseline methods estimate CEPO. DRNet is a neural network trained on
the observational data, whereas CCPFN is a pretrained model that uses observational data as context at inference time.
We also evaluate S-learners based on LimiX and TabPFN v3, using each model to predict the outcome from the covariates and treatment.
For each CEPO method, we report both the estimated CEPO
$\hat\mu(\mathbf{x},t)$ used directly as the counterfactual
prediction and the prediction obtained through CATE adjustment
in Eq.~\ref{eq:factual_residual_calibration}.
\begin{table}[t]
    \centering
    \small
    \setlength{\tabcolsep}{4pt}
    \renewcommand{\arraystretch}{1.12}
    \caption{Counterfactual prediction RMSE on C-Suite datasets
    (mean $\pm$ standard deviation).
    Bold indicates the lowest mean RMSE in each column,
    excluding the factual-outcome baseline. Numbers in parentheses denote outcome indices.
    The superscript ``cal'' denotes an adjustment variant.
    \colorbox{calibrationblue}{Light blue} marks settings satisfying the two structural
conditions in Proposition~\ref{prop:exact_cate_adjustment}.}
    \label{app:csuite_results}

    \begin{tabularx}{\linewidth}{@{}l
    >{\columncolor{calibrationblue}\centering\arraybackslash}X
    >{\centering\arraybackslash}X
    >{\centering\arraybackslash}X
    >{\columncolor{calibrationblue}\centering\arraybackslash}X@{}}
        \toprule
        \addlinespace[4pt]
        & \multicolumn{2}{c}{\makecell{Collider\\linear Gaussian}}
        & \multicolumn{2}{c}{\makecell{Fork\\linear non-Gaussian}} \\
        \cmidrule(lr){2-3} \cmidrule(lr){4-5}
        Method
        & \multicolumn{1}{c}{(1)}
        & \multicolumn{1}{c}{(2)}
        & \multicolumn{1}{c}{(1)}
        & \multicolumn{1}{c@{}}{(2)} \\
        \midrule
        y factual
        & \meanstd{0.81}{0.00}
        & \meanstd{0.00}{0.00}
        & \meanstd{0.00}{0.00}
        & \meanstd{0.00}{0.00} \\
        \midrule
        CausalForestDML
        & \meanstd{0.16}{0.01}
        & \meanstd{0.72}{0.02}
        & \meanstd{0.51}{0.01}
        & \meanstd{0.30}{0.03} \\
        DRNet
        & \meanstd{0.64}{0.04}
        & \meanstd{0.95}{0.03}
        & \meanstd{0.76}{0.09}
        & \meanstd{0.75}{0.07} \\
        DRNet\textsuperscript{cal}
        & \meanstd{0.13}{0.01}
        & \meanstd{0.56}{0.09}
        & \meanstd{0.65}{0.10}
        & \meanstd{0.46}{0.13} \\
        CCPFN
        & \meanstd{0.58}{0.00}
        & \meanstd{0.98}{0.00}
        & \meanstd{0.56}{0.00}
        & \meanstd{0.57}{0.00} \\
        CCPFN\textsuperscript{cal}
        & \meanstd{0.04}{0.01}
        & \meanstd{0.68}{0.00}
        & \meanstd{0.45}{0.00}
        & \meanstd{0.06}{0.02} \\
        S-Learner + LimiX
        & \meanstd{0.58}{0.00}
        & \meanstd{0.99}{0.00}
        & \meanstd{0.60}{0.00}
        & \meanstd{0.56}{0.00} \\
        S-Learner + LimiX\textsuperscript{cal}
        & \meanstd{0.05}{0.01}
        & \meanstd{0.68}{0.00}
        & \meanstd{0.49}{0.00}
        & \meanstd{0.06}{0.01} \\
        S-Learner + TabPFN v3
        & \meanstd{0.58}{0.00}
        & \meanstd{1.01}{0.00}
        & \meanstd{0.61}{0.00}
        & \meanstd{0.55}{0.00} \\
        S-Learner + TabPFN v3\textsuperscript{cal}
        & \meanstd{0.04}{0.01}
        & \meanstd{0.71}{0.00}
        & \meanstd{0.49}{0.00}
        & \bestmeanstd{0.01}{0.00} \\
        \midrule
        CAFE
        & \bestmeanstd{0.03}{0.00}
        & \bestmeanstd{0.06}{0.01}
        & \bestmeanstd{0.00}{0.00}
        & \bestmeanstd{0.01}{0.00} \\
        \bottomrule
    \end{tabularx}

    \par\vspace{0.8em}

    \begin{tabularx}{\linewidth}{@{}l
    >{\centering\arraybackslash}X
    >{\columncolor{calibrationblue}\centering\arraybackslash}X
    >{\columncolor{calibrationblue}\centering\arraybackslash}X
    >{\centering\arraybackslash}X@{}}
        \toprule
        \addlinespace[4pt]
        & \multicolumn{2}{c}{\makecell{Fork\\nonlinear Gaussian}}
        & \multicolumn{1}{c}{Large}
        & \multicolumn{1}{c@{}}{Nonlinear} \\
        \cmidrule(lr){2-3}
        Method
        & \multicolumn{1}{c}{(1)}
        & \multicolumn{1}{c}{(2)}
        & \multicolumn{1}{c}{backdoor}
        & \multicolumn{1}{c@{}}{Simpson} \\
        \midrule
        y factual
        & \meanstd{0.00}{0.00}
        & \meanstd{0.00}{0.00}
        & \meanstd{0.74}{0.00}
        & \meanstd{1.36}{0.00} \\
        \midrule
        CausalForestDML
        & \meanstd{0.38}{0.01}
        & \meanstd{0.14}{0.01}
        & \meanstd{0.58}{0.06}
        & \meanstd{1.21}{0.01} \\
        DRNet
        & \meanstd{0.56}{0.00}
        & \meanstd{0.42}{0.04}
        & \meanstd{0.97}{0.03}
        & \meanstd{3.77}{0.17} \\
        DRNet\textsuperscript{cal}
        & \meanstd{0.40}{0.00}
        & \meanstd{0.14}{0.05}
        & \meanstd{0.47}{0.06}
        & \meanstd{3.76}{0.17} \\
        CCPFN
        & \meanstd{0.54}{0.00}
        & \meanstd{0.39}{0.00}
        & \meanstd{0.93}{0.01}
        & \meanstd{1.29}{0.02} \\
        CCPFN\textsuperscript{cal}
        & \meanstd{0.34}{0.00}
        & \meanstd{0.02}{0.00}
        & \meanstd{0.42}{0.01}
        & \meanstd{1.26}{0.03} \\
        S-Learner + LimiX
        & \meanstd{0.54}{0.00}
        & \meanstd{0.39}{0.00}
        & \meanstd{0.98}{0.01}
        & \meanstd{1.50}{0.04} \\
        S-Learner + LimiX\textsuperscript{cal}
        & \meanstd{0.37}{0.00}
        & \meanstd{0.03}{0.01}
        & \meanstd{0.53}{0.02}
        & \meanstd{1.47}{0.04} \\
        S-Learner + TabPFN v3
        & \meanstd{0.55}{0.00}
        & \meanstd{0.39}{0.00}
        & \meanstd{0.92}{0.02}
        & \meanstd{1.24}{0.05} \\
        S-Learner + TabPFN v3\textsuperscript{cal}
        & \meanstd{0.37}{0.00}
        & \meanstd{0.02}{0.00}
        & \meanstd{0.41}{0.04}
        & \meanstd{1.19}{0.05} \\
        \midrule
        CAFE
        & \bestmeanstd{0.00}{0.00}
        & \bestmeanstd{0.01}{0.00}
        & \bestmeanstd{0.15}{0.02}
        & \bestmeanstd{0.93}{0.01} \\
        \bottomrule
    \end{tabularx}
\end{table} 
\paragraph{Results}
Table~\ref{app:csuite_results} shows that CATE adjustment substantially improves counterfactual prediction in all four settings satisfying Proposition~\ref{prop:exact_cate_adjustment}. For example, adjustment reduces CCPFN's RMSE from $0.58$ to $0.04$ on Collider linear Gaussian (1), from $0.57$ to $0.06$ on Fork linear non-Gaussian (2), and from $0.39$ to $0.02$ on Fork nonlinear Gaussian (2). Large Backdoor also shows substantial improvements across all CEPO methods. Nevertheless, none of the adjusted methods outperforms CAFE. The remaining prediction errors may reflect inaccuracies in the estimated CEPOs.

In the four settings that violate these conditions, adjustment yields more limited improvements and leaves a substantial gap to CAFE. Collider linear Gaussian (2) illustrates both the limitation of this adaptation and the source of its partial improvement. Its graph is $T\to X\leftarrow Y$, so intervening on $T$ leaves $Y$ unchanged and the true counterfactual is $y^t=y$. The CEPO is therefore $\mu(x,t)=\mathbb{E}[Y\mid X=x]$, which does not depend on $t$. However, conditioning on the collider $X$ induces an association between $T$ and $Y$, so the observational regression $m(x,t)=\mathbb{E}[Y\mid X=x,T=t]$ varies with $t$. In this linear Gaussian setting, write $m(x,t)=\alpha x+\beta t+c$, with $\beta\neq0$, and let $r=y-m(x,t_{\mathrm f})$ denote the factual prediction residual. The prediction errors before and after adjustment are then
\begin{gather*}
m(x,t)-y^t = \beta(t-t_{\mathrm f})-r,\\
y+m(x,t)-m(x,t_{\mathrm f})-y^t
= \beta(t-t_{\mathrm f}).
\end{gather*}
Adjustment restores the individual's factual residual, which explains why it can reduce prediction error. It nevertheless retains the spurious treatment effect induced by conditioning on the collider. Thus, accurate observational prediction alone cannot eliminate the remaining error in this setting.

\subsection{Ablation study on the training prior} 
\label{app:albation} 
\begin{table}[t]
    \centering
    \small
    \setlength{\tabcolsep}{4pt}
    \renewcommand{\arraystretch}{1.12}
    \caption{Counterfactual prediction RMSE on C-Suite datasets
    (mean $\pm$ standard deviation). Full Prior denotes CAFE trained with its original prior over diverse DAGs. Backdoor Prior denotes the ablation model trained only on back-door graphs.
    Bold indicates the lowest mean RMSE in each column,
    excluding the factual-outcome baseline. Numbers in parentheses denote outcome indices.
    \colorbox{calibrationblue}{Light blue}  marks settings satisfying the two structural
conditions in Proposition~\ref{prop:exact_cate_adjustment}.}
    \label{app:prior_ablation}

    \begin{tabularx}{\linewidth}{@{}l
    >{\columncolor{calibrationblue}\centering\arraybackslash}X
    >{\centering\arraybackslash}X
    >{\centering\arraybackslash}X
    >{\columncolor{calibrationblue}\centering\arraybackslash}X@{}}
        \toprule
        \addlinespace[4pt]
        & \multicolumn{2}{c}{\makecell{Collider\\linear Gaussian}}
        & \multicolumn{2}{c}{\makecell{Fork\\linear non-Gaussian}} \\
        \cmidrule(lr){2-3} \cmidrule(lr){4-5}
        Method
        & \multicolumn{1}{c}{(1)}
        & \multicolumn{1}{c}{(2)}
        & \multicolumn{1}{c}{(1)}
        & \multicolumn{1}{c@{}}{(2)} \\
        \midrule
        y factual
        & \meanstd{0.81}{0.00}
        & \meanstd{0.00}{0.00}
        & \meanstd{0.00}{0.00}
        & \meanstd{0.00}{0.00} \\
        \midrule
        Backdoor Prior
        & \bestmeanstd{0.02}{0.00}
        & \meanstd{0.39}{0.00}
        & \meanstd{1.20}{0.06}
        & \bestmeanstd{0.01}{0.00} \\
 
        Full Prior 
        & \meanstd{0.03}{0.00}
        & \bestmeanstd{0.06}{0.01}
        & \bestmeanstd{0.00}{0.00}
        & \bestmeanstd{0.01}{0.00} \\
        \bottomrule
    \end{tabularx}

    \par\vspace{0.8em}

    \begin{tabularx}{\linewidth}{@{}l
    >{\centering\arraybackslash}X
    >{\columncolor{calibrationblue}\centering\arraybackslash}X
    >{\columncolor{calibrationblue}\centering\arraybackslash}X
    >{\centering\arraybackslash}X@{}}
        \toprule
        \addlinespace[4pt]
        & \multicolumn{2}{c}{\makecell{Fork\\nonlinear Gaussian}}
        & \multicolumn{1}{c}{Large}
        & \multicolumn{1}{c@{}}{Nonlinear} \\
        \cmidrule(lr){2-3}
        Method
        & \multicolumn{1}{c}{(1)}
        & \multicolumn{1}{c}{(2)}
        & \multicolumn{1}{c}{backdoor}
        & \multicolumn{1}{c@{}}{Simpson} \\
        \midrule
        y factual
        & \meanstd{0.00}{0.00}
        & \meanstd{0.00}{0.00}
        & \meanstd{0.74}{0.00}
        & \meanstd{1.36}{0.00} \\
        \midrule
        Backdoor Prior
        & \meanstd{0.26}{0.01}
        & \bestmeanstd{0.01}{0.00}
        & \bestmeanstd{0.13}{0.01}
        & \meanstd{1.01}{0.02} \\
      
        Full Prior 
        & \bestmeanstd{0.00}{0.00}
        & \bestmeanstd{0.01}{0.00}
        & \meanstd{0.15}{0.02}
        & \bestmeanstd{0.93}{0.01} \\
        \bottomrule
    \end{tabularx}
\end{table}
We examine how restricting the causal structures used for pretraining affects counterfactual prediction. Specifically, we train an ablation model restricted to the standard back-door setting considered by CausalPFN~\citep{balazadeh2026causalpfn} and CCPFN~\citep{stith2026causal}. We retain the original DAG sampling procedure but change how the intervention and outcome variables are selected. For each sampled DAG, we choose $T$ uniformly from nodes with exactly one descendant and designate that descendant as $Y$. If no such node exists, we resample the DAG. This yields a back-door structure: $T$ directly affects $Y$, while all remaining variables $\mathbf{X}$ are pretreatment covariates that may influence treatment assignment and the outcome. The sampling procedures for structural functions and exogenous noise, the model architecture, and the training hyperparameters remain unchanged.

\paragraph{Results.}
We find that the ablation model trained with back-door prior (Table~\ref{app:prior_ablation}) shows a similar pattern of results to the CATE-adjusted baselines (Table~\ref{app:csuite_results}). Under the back-door prior, the factual treatment $t_\text{f}$ and outcome $y$ provide information about the individual's outcome (centered) noise, i.e., $y-\mu(\mathbf{x},t_{\mathrm f})$ in Eq.\ref{eq:exact_residual_calibration}, which is shared between the factual and counterfactual outcomes. The ablation model can learn to infer this noise term from its factual inputs $t_{\mathrm f}$ and $y$, implicitly performing abduction, while the CATE adjustment restores it explicitly via $y-\hat{\mu}(\mathbf{x},t_{\mathrm f})$ in Eq.\ref{eq:factual_residual_calibration}. Both approaches can therefore recover the individual noise missing from a CEPO prediction and use the same noise under intervention to predict the individual's counterfactual outcome, explaining their strong performance in the four settings satisfying the proposition's conditions.   

The other four settings involve covariates affected by treatment ($T\rightarrow X$) or outcomes that cause treatment ($Y\rightarrow T$), which the back-door prior excludes. In these cases, the adjustment above is no longer guaranteed to recover the true counterfactual. The ablation model's RMSE increases relative to full CAFE in all four settings, including from $0.06$ to $0.39$ on Collider linear Gaussian (2) and from $0.00$ to $1.20$ on Fork linear non-Gaussian (1). These results show that providing factual observations and directly supervising counterfactual predictions does not overcome the limitations of a restricted training prior. The full model's advantage, with the same architecture and prediction objective, highlights the importance of covering diverse causal relationships during pretraining.